\DeclareMathOperator*{\argmin}{argmin} % no space, limits underneath in displays
\newcommand{\vc}[1]{\ensuremath{\bm{#1}}\xspace}
\newcommand{\e}{\mathrm{e}\xspace}
\newcommand{\hedgetree}{\textsc{HATT}\xspace}
\newcommand{\hedgevote}{\textsc{HOPP}\xspace}
\newcommand{\hedge}{\textsc{Hedge}\xspace}
\newcommand{\updatehedge}{\textsc{EWU}\xspace}
\newcommand{\calS}{\mathcal{S}}
\newcommand{\calT}{\mathcal{T}}
\newcommand{\calA}{\mathcal{A}}
\newcommand{\calE}{\mathcal{E}}
\newcommand{\calZ}{\mathcal{Z}}
\newcommand{\calH}{\mathcal{H}}
\newcommand{\tournament}{\textsc{Tournament}\xspace}
\newcommand{\one}{\mathbbm{1}}
\newcommand{\E}{\mathbb{E}}
\newcommand{\vote}{\textsc{SelectionRule}\xspace}
\newcommand{\level}{\textsc{Level}\xspace}
\newcommand{\lev}{\text{level}}
\newcommand{\Reg}{\text{Reg}\xspace}
\newcommand{\rt}{\text{root}}
\newcommand{\winner}{\text{winner}}
\newcommand{\order}{\mathcal{O}}
\newtheorem{property}{Property}
\newtheorem*{theorem*}{Theorem}
\newtheorem{theorem}{Theorem}
\newtheorem{lemma}[theorem]{Lemma}
\newtheorem*{lemma*}{Lemma}
\theoremstyle{definition}
\title[Approximate Regret Bounds for Sleeping Experts/Bandits]{Adversarial Online Learning with Changing Action Sets: \\ Efficient Algorithms with Approximate Regret Bounds}
\begin{document}

\maketitle

\begin{abstract}
We revisit the problem of online learning with sleeping experts/bandits: in each time step, only a subset of the actions are available for the algorithm to choose from (and learn about).
The work of \citet{kleinberg2010regret} showed that there exist no-regret algorithms which perform no worse than the \emph{best ranking of actions} asymptotically. Unfortunately, achieving this regret bound appears computationally hard: \citet{kanade2014learning} showed that achieving this no-regret performance is at least as hard as PAC-learning DNFs, a notoriously difficult problem.

In the present work, we relax the original problem and study computationally efficient \emph{no-approximate-regret} algorithms: such algorithms may exceed the optimal cost by a multiplicative constant \emph{in addition to} the additive regret. We give an algorithm that provides a no-approximate-regret guarantee for the general sleeping expert/bandit problems. For several canonical special cases of the problem, we give algorithms with significantly better approximation ratios; these algorithms also illustrate different techniques for achieving no-approximate-regret guarantees. 
\end{abstract}

\section{Introduction}
\label{sec:introduction}
%!TEX root=main.tex

Online learning with a fixed set of actions is a well-studied problem: the learner sequentially selects one of $N$ actions and receives some feedback on the actions' losses.
In the full-information setting (i.e., the \emph{expert} problem~\citep{FreundSc97}), the feedback is the losses of all actions, while in the bandit setting (i.e., the multi-armed bandit problem~\citep{auer2002nonstochastic}), the feedback is only the loss of the chosen action.
In either case, the learner's goal is to minimize her regret over $T$ rounds, defined as the difference between her total loss and the loss of the best fixed action.
It is well-known that efficient algorithms exist with sublinear regret of order $\order(\sqrt{T})$ (known as ``no-regret'' algorithms).

In many situations, however, not every action is available every time.
Take horse racing as an example, where each action corresponds to betting on a horse.
While there is a fixed set of horses each season, only a small subset of them are competing in any one race; thus, only a subset of actions are available to choose from.
Other examples include recommendation systems where products are not available at all times; this is particularly relevant for food (where each action corresponds to a
restaurant or a special meal) or news (where each action corresponds to a category of news).

To capture these situations, a model called sleeping expert/bandit has been proposed~\citep{freund1997using}, where in each round, only some actions are \emph{awake} (i.e., available to be chosen and learned about), while the others are \emph{asleep}.
The standard regret measure no longer makes sense; in particular, there might not even be a fixed action which is awake all the time.
\citet{freund1997using} proposed to measure regret against a particular action only for the rounds when this action is awake.
As an alternative, \citet{kleinberg2010regret} proposed to measure regret against the \emph{best ranking} of the $N$ actions, which naturally selects the available action with the highest ranking in each round.
This latter performance measure is especially suited for applications such as horse racing or recommendation systems, and is the focus of our work.

In this setup, \citet{kleinberg2010regret} proposed algorithms with optimal regret $\order(\sqrt{NT\log(N)})$ for full-information feedback and $\order(\sqrt{NKT\log(N)})$ for bandit feedback; here, $K$ is an upper bound on the number of available actions in each round. \citet{kleinberg2010regret} made no assumptions at all about how the available sets and actions' losses are chosen; i.e., their results hold in the adversarial setting.
Unfortunately, their algorithms are computationally inefficient --- they require maintaining information about all $N!$ rankings explicitly. 
On the other extreme, a trivial algorithm that treats each possible available subset independently achieves regret that is exponential in $K$.

The computational inefficiency of these algorithms is no accident.
It was showed in~\citep{kanade2014learning} that achieving no-regret performance for this problem is at least as hard as PAC-learning DNFs, a notoriously difficult problem.
Follow-up work thus focused on developing efficient no-regret algorithms under additional assumptions, such as imposing distributional assumptions (see \textbf{Related work} below).

\renewcommand{\arraystretch}{1.2}
\begin{table*}[t]
   \centering
   \caption{Summary of main results. $N$ is the total number of actions, $K$ is an upper bound on the number of available actions at each round, $T$ is the number of rounds, $Z$ is the number of actions with zero loss at each round. Except for the first algorithm, all others can be implemented efficiently.}
   \label{tab:results}
\resizebox{\textwidth}{!}{%
   \begin{tabular}{|c|c|c|c|c|}
   \hline
  Algorithm &  Approx. Ratio ($\alpha$)      & $\alpha$-approx. Regret       & Feedback & Constraint  \\
  \hline   
  \citet{kleinberg2010regret}  &  $1$ &  $\order(\sqrt{NKT\log N})$     &      bandit &   inefficient  \\
  \hline   
  \makecell{Treat different  $\calA_t$ \\ independently  (Section \ref{sec:problem-setting})} &     $1$      &         $\sqrt{N^K K T}$        & bandit           &    \\
  \hline  
  \hedgetree (Section \ref{subsec:hedgetree})        &    $\order(\log K)$      &    $\order(N^2)$          & full-info           &  $Z=1$ \\
  \hline   
  \hedgevote (Section \ref{subsec:hedgevote})         & $\order(K^2)$          & $\order(N^4)$      &  full-info          &$Z=2$  \\
  \hline  
  {Bandit-\hedgetree}  (Section \ref{sec:bandit})        & $\order(\log K)$         & $\order(N\sqrt{KT} + N^2K$       )  & bandit       &  $Z=1$ \\
  \hline   
  \level (Section \ref{sec:bandit} )    & $N$      &  $N^2$     &  bandit    &   \\
  \hline   
  \end{tabular}
  }
\end{table*}

In this paper, we take a different approach to get around the computational hardness:
we still consider completely adversarial environments, but measure the learner's performance by \emph{$\alpha$-approximate regret} (for some approximation ratio $\alpha > 1$), which compares the learner's total loss to $\alpha$ times that of the best ranking.
Such approximate regret measures have been studied in other online learning problems, such as~\citep{garber2017efficient, roughgarden2018optimal}, but to our knowledge, our work is the first to consider them for the sleeping experts/bandits problem.
Our most general algorithm is a simple and efficient algorithm with approximation ratio $\alpha = N$ and regret $\order(N^2)$ (independent of $T$), even under bandit feedback (Section~\ref{sec:bandit}).

We also consider two cases with special structures in losses and develop different algorithms with much better approximation ratios (see Table~\ref{tab:results} for a summary).
First, for the case when in each round, there is only one zero-loss action, we improve the approximation ratio to $\log(K)$, both under full-information feedback (Section~\ref{subsec:hedgetree}) and bandit feedback (Section~\ref{sec:bandit}; in the latter case, the regret becomes $\order(\sqrt{T})$).
Note that the ``one zero-loss action'' structure is very common --- in the horse racing example, there is only one winner in each race,\footnote{%
In this example, in addition to the loss of betting on each horse, the bettor observes the ranking of each race as well. However, in the adversarial setting that we consider, this extra information is not useful since the ranking can be arbitrary from round to round.} 
and in a multi-class classification problem, only one class is the correct one.
Our algorithm is based on a novel way of aggregating several instances of the classic \hedge algorithm~\citep{FreundSc97} over action pairs via a tournament.

One might wonder whether in this restricted setting, the aforementioned computational hardness still applies.
Indeed, we generalize the argument of~\citep{kanade2014learning} and confirm that, even for this simple special case, obtaining no-regret algorithms is computationally hard (Theorem~\ref{thm:hardness}).
%\footnote{%
%We do not have a better lower bound on the approximation ratio for polynomial-time algorithms; these kinds of computation-constrained lower bounds are scarce in the literature.
%However, we note that, together with~\citep{awasthi2010improved}, our proof of Theorem~\ref{thm:hardness} implies that achieving an approximation ratio better than $\order(K^{1/3})$ (in the general case) would improve the state-of-the-art for agnostically learning disjunctions with polynomial-time algorithms.
%}

Next, we consider the case with two zero-loss actions in each round (e.g., betting on the winner or the runner-up has zero loss), and develop an algorithm in the full-information setting with approximation ratio $\order(K^2)$ and regret $\order(N^4)$ (Section~\ref{subsec:hedgevote}).
While the algorithm is also based on aggregating \hedge instances, it is significantly more complex and requires hedging over \emph{pairs of pairs} as well as \emph {triples}.
Our results shed light on how to deal with a small number of zero-cost actions, which is a common situation for machine learning problems with sparse rewards. 
Indeed, sparse rewards are studied in several recent works in the easier setting of fixed action sets (e.g.,~\citep{kwon2016gains, bubeck2018sparsity}).

\paragraph{Related work}
%\label{sec:related_work}
Several works propose efficient algorithms with exact regret (i.e., $\alpha=1$) guarantees under additional assumptions.
The original work of~\citet{kleinberg2010regret} considers a setting where the losses follow a fixed distribution, while~\citet{kanade2009sleeping}, \citet{neu2014online}, and \citet{saha2020improved} consider a setting where the action availability follows a fixed distribution.
\citet{hazan2012near} study the case when $K=2$ and achieve nearly optimal regret.
Recently, \citet{shayestehmanesh2019dying} studied a special case in which actions never wake up after falling asleep.

%As an example of \caseone, consider the learner as a horse-racing bettor facing a series of horse races. In the $t$-th race, a group of horses ($\calA_t$) is selected to compete with each other, and the bettor is allowed to bet on one of them ($a_t$). The bettor only earns money when she bets on the winner of that race (the winner is the $a\in\calA_t$ such that $\ell_t(a)=0$). 

%Finally, one may wonder whether the computational hardness result in \citep{kanade2014learning} still hold when in the special cases \caseone or \casetwo. We show in Section~\ref{sec: computation hard} that even under the most restrictive assumption of \caseone, the arguments of \citep{kanade2014learning} still hold, and thus getting an exact regret bound (i.e., with approximation ratio $1$) is still computationally hard in this special case.  

\section{Problem Setting and Preliminaries}
\label{sec:problem-setting}
%!TEX root=main.tex

We consider the problem of online learning with a changing action set, also called the sleeping expert/bandit problem. Similar to the standard expert/bandit setting, the learner is faced with a set of actions $[N]= \{1, \ldots, N\}$. However, in each round $t$, only a subset $\calA_t\subseteq [N]$ is \emph{available}, and the learner can only choose actions from $\calA_t$ in that round. More precisely, the protocol is as follows. %described in Protocol~\ref{protocol: sleeping}.
For each round $t=1, \ldots, T$, the adversary first chooses $\calA_t\subseteq [N]$ and $\ell_t(a)\in\{0,1\}$ for all $a\in\calA_t$, with $\calA_t$ revealed to the learner.
Then, the learner chooses an action $a\in\calA_t$, suffers loss $\ell_t(a_t)$, and receives some feedback.
%\begin{protocol}[t]
%    \caption{Online Learning with Changing Action Sets}
%    \label{protocol: sleeping}
%    \KwIn{$N$, $T$}
%    
%    \For{$t=1, \ldots, T$}{
%        The adversary chooses $\calA_t\subseteq [N]$ and
%        $\ell_t(a)\in\{0,1\}$ for all $a\in\calA_t$.
%        
%        $\calA_t$ is revealed to the learner.
%        
%        The learner chooses an action $a_t\in \calA_t$ and suffers loss $\ell_t(a_t)$.
%        
%        The learner receives some \emph{feedback}. 
%    }
%\end{protocol}
%
%There, it is underspecified what constitutes ``feedback'', and we consider the following two settings: 
%\begin{itemize}
We consider two different settings with different feedback:
     1) in the \emph{full-information} setting, the feedback is $(\ell_t(a))_{a\in\calA_t}$, i.e., the losses of all actions in $\calA_t$;
     2) in the \emph{bandit} setting, the feedback is $\ell_t(a_t)$, i.e., the loss of the chosen action.
%\end{itemize}

Both $\calA_t$ and $\ell_t$ are decided by the adversary without any distributional assumptions.
We assume that the losses are binary, i.e., $\ell_t(a)\in \{0,1\}$. %(otherwise one can scale the loss to $[0,1]$ and then randomly round it to $0$ or $1$).
%Our algorithms can be extended to the case when $\ell_t(a)$ lies in $[0,1]$. Specifically, the learner can toss a die in each round after she receives the feedback, treating the loss as $1$ with probability $\ell_t(a)$ and as $0$ with probability $1-\ell_t(a)$. She can then feed the new loss to the binary algorithm. The expectation of the new loss is the same as the original loss, so the learner still achieves the expected performance guarantees of our algorithms. 
The goal of the learner is to be competitive with the best \emph{ranking} of actions. A ranking $\sigma$ specifies a total order on $[N]$, which is given by a bijection $m_{\sigma}: [N] \to [N]$, giving the position in the ranking for each element in $[N]$.
Due to frequency of use in our paper, we reserve the letter $\sigma$ itself for the mapping $\sigma : 2^{[N]} \setminus \emptyset \to [N]$
defined by $\sigma(\calS) = \argmin_{x \in \calS} m_{\sigma}(x)$.
That is, $\sigma(\calS)$ is the highest-ranked element of $\calS$, according to
$m_{\sigma}$. We write $\sigma(\{i,j\})$ or $\sigma(\{i,j,k\})$ as $\sigma(i,j)$ or $\sigma(i,j,k)$ for simplicity.

For a fixed ranking $\sigma$, we define its choice at time $t$ as its highest-ranked action among $\calA_t$ --- using our notation, this can be written as $\sigma(\calA_t)$. 
One standard way to measure the performance of the learner is to compare her total loss with that of the best ranking, formally defined as the \emph{regret}:
%\begin{align*}
$\Reg_T = \sum_{t=1}^T \ell_t(a_t) - L^*$, 
%\end{align*}
where $L^*= \min_{\sigma} \sum_{t=1}^T \ell_t(\sigma(\calA_t))$ is the total loss of the best ranking. 
An algorithm with regret sublinear in $T$ performs almost as well as the best ranking in the long run.

Unfortunately, it was shown that achieving sublinear regret is computationally at least as hard as PAC-learning DNFs, for which no polynomial-time (in $N$) algorithm is known~\citep{kanade2014learning}. Therefore, we pursue the relaxed goal of providing polynomial-time algorithms that guarantee sub-linear \emph{$\alpha$-approximate} regret, defined as follows:
%\begin{align*}
$\Reg_T^\alpha = \sum_{t=1}^T \ell_t(a_t) - \alpha L^*$.
%\end{align*}
Phrased in another way, our results can all be written as 
%\begin{align}
$\sum_{t=1}^T \ell_t(a_t)\leq \alpha L^* + \beta(T)$
%\label{eqn: regret result}
%\end{align}
for some $\beta(T)$ which grows sub-linearly in $T$;
%Eq.~\eqref{eqn: regret result})
our goal is to make $\alpha$ and $\beta(T)$ as small as possible.

Some of our guarantees depend on the (largest) cardinality of the sets $\calA_t$ of available actions, denoted by $K$. 
Note that achieving $\alpha=1$ and $\beta(T) = \order\left(\sqrt{\left(\sum_{i=1}^K \binom{N}{i}\right)KT }\right) = \order(\sqrt{N^K KT})$ efficiently is trivial.
One simply treats each possible $\calA_t$ %\dkedit{(of which there are at most $\min(\order(N^K, T))$)} 
as an independent problem with a fixed action set and runs a separate standard bandit algorithm with regret $\order(\sqrt{KT})$, then combines all regret bounds with a Cauchy-Schwarz inequality.\footnote{For more details, see~\citep[Lemma~3]{abernethy2010can} for the case with full-information and $K=2$.}
In contrast, our bounds are all polynomial in $N$ and $K$.

\paragraph{Notation.}
%\dkdelete{We define $\ell_t(\calS)=\frac{1}{|\calS|}\sum_{a\in\calS}\ell_t(a)$ for every set $\calS\subseteq\calA_t$.}
We use $\one[\calE]$ as an indicator function, which is $1$ if the event $\calE$ is true and $0$ otherwise. We use $\Delta_\calS$ to denote the probability simplex over a set $\calS$, i.e., $\Delta_\calS = \left\{ \vc{p}: \calS\to [0,1] ~\big\vert~ \sum_{a\in\calS}p(a)=1 \right\}$. 

\subsection{Preliminaries: The Hedge Algorithm}
\label{subsec:preliminary}
Most of our algorithms are based on the classic \hedge algorithm for the expert learning problem~\citep{FreundSc97}, which we review below. The setting of the expert learning problem is the same as our problem with full-information feedback, except the action set $\calA_t=\calS$ is fixed throughout. The \hedge algorithm is given as Algorithm~\ref{alg:hedge} --- we use $\calS$ instead of $\calA$ for its (fixed) action set and $[0,R]$ for the loss range, because we will later invoke it with different choices of $\calS$ and $R$.

\begin{figure}[t]
\begin{algorithm}[H]
\caption{\hedge $(\text{parameter: $\eta$})$}
\label{alg:hedge}
\DontPrintSemicolon
%\SetAlgoNoEnd
\KwIn{$\calS$}

%\textbf{Initialization}:
%
\lForAll{$a \in \calS$}{$p_{1}(a)=\frac{1}{|\calS|}$.}

\For{$t=1, 2, \ldots, T$}{
  Sample $a_t\sim \vc{p}_t$.
  
  Receive $\ell_{t}(a)\in[0, R]$ for all $a\in\calS$.
%  
%  \lFor{all $a\in\calS$}{
%    let $p_{t+1}(a) = \frac{p_{t}(a)\e^{-\eta \ell_{t}(a)}}{\sum_{a'\in\calS} p_{t}(a')\e^{-\eta \ell_{t}(a')}}$.
%}

   Let $\vc{p}_{t+1} = \updatehedge(\vc{p}_t, \vc{\ell}_t)$. 
}
\end{algorithm}
\begin{algorithm}[H]
\caption{\updatehedge (Exponential Weight Update)}
\DontPrintSemicolon
%    \SetAlgoNoEnd
\KwIn{$\vc{p}_t\in\Delta_\calS, \vc{\ell}_t\in[0,R]^\calS$.}

\textbf{Parameter}: $\eta>0$.

\lForAll{$a\in\calS$}{%let
$
    p_{t+1}(a) = \frac{p_{t}(a)\e^{-\eta \ell_{t}(a)}}{\sum_{a'\in\calS} p_{t}(a')\e^{-\eta \ell_{t}(a')}}.
$
}
\Return{$\vc{p}_{t+1}$.}
\end{algorithm}
\end{figure}

The performance guarantee of the \hedge algorithm is captured by, e.g., Theorem~2.4 of \cite{cesa2006prediction}. The following lemma slightly extends their result for general values of $R$, which will be needed in the analysis in Section~\ref{sec:bandit}.
%; its proof --- along with all other omitted proofs --- is in the appendix.

\begin{lemma}
    \label{lem: hedge bound approx}
    Algorithm~\ref{alg:hedge} ensures: 
$
        \E\left[\sum_{t=1}^T \ell_{t}(a_t)\right] \leq \frac{\eta R}{1-\e^{-\eta R}} \cdot \E\left[\min_{a^*\in\calS}\sum_{t=1}^T \ell_{t}(a^*)\right] + \frac{R\ln |\calS|}{1-\e^{-\eta R}}. 
$
\end{lemma}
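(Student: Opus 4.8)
The plan is to follow the standard potential-function argument for \hedge, adapting its single key inequality to the loss range $[0,R]$. First I would introduce unnormalized weights $w_1(a)=1$ and $w_{t+1}(a)=w_t(a)\,\e^{-\eta\ell_t(a)}$ for $a\in\calS$, together with their sum $W_t=\sum_{a\in\calS}w_t(a)$, so that the sampling distribution satisfies $\vc{p}_t(a)=w_t(a)/W_t$ exactly (this is just unrolling the \updatehedge recursion). Then $W_1=|\calS|$, and for the final round $W_{T+1}\ge w_{T+1}(a^*)=\exp\!\big(-\eta\sum_{t=1}^T\ell_t(a^*)\big)$ for every $a^*\in\calS$, in particular for the one attaining $\min_{a^*\in\calS}\sum_t\ell_t(a^*)$.

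The core step is to control the one-round multiplicative change $W_{t+1}/W_t=\sum_{a\in\calS}\vc{p}_t(a)\,\e^{-\eta\ell_t(a)}$. Here I would use that $x\mapsto\e^{-\eta x}$ is convex, so on $[0,R]$ it lies below the chord through $(0,1)$ and $(R,\e^{-\eta R})$, giving $\e^{-\eta x}\le 1-\tfrac{1-\e^{-\eta R}}{R}\,x$ for all $x\in[0,R]$. Plugging this in and using $1-z\le\e^{-z}$ yields $W_{t+1}/W_t\le\exp\!\big(-\tfrac{1-\e^{-\eta R}}{R}\sum_{a\in\calS}\vc{p}_t(a)\ell_t(a)\big)$. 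Multiplying over $t=1,\dots,T$ and combining with the bounds on $W_1$ and $W_{T+1}$ gives $\exp\!\big(-\eta\sum_t\ell_t(a^*)\big)\le|\calS|\cdot\exp\!\big(-\tfrac{1-\e^{-\eta R}}{R}\sum_t\sum_a\vc{p}_t(a)\ell_t(a)\big)$; taking logarithms and rearranging produces $\sum_t\sum_a\vc{p}_t(a)\ell_t(a)\le\tfrac{\eta R}{1-\e^{-\eta R}}\sum_t\ell_t(a^*)+\tfrac{R\ln|\calS|}{1-\e^{-\eta R}}$, and since this holds pathwise for every fixed $a^*$ it holds for the per-realization minimizer.

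Finally I would pass to expectations. Conditioning on the history before round $t$, the draw $a_t\sim\vc{p}_t$ gives $\E[\ell_t(a_t)\mid\text{history}]=\sum_{a\in\calS}\vc{p}_t(a)\ell_t(a)$ — note that both $\vc{p}_t$ and $\vc{\ell}_t$ are determined by the history, which matters when $\vc{\ell}_t$ is itself a random estimator as in Section~\ref{sec:bandit}. Summing over $t$, applying the tower rule, and taking expectations of the rearranged inequality above yields exactly the claimed bound.

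There is essentially no serious obstacle here — the argument is routine. The only points requiring a little care are (i) verifying the chord inequality at the endpoints $x=0$ and $x=R$ so that the constant $\tfrac{1-\e^{-\eta R}}{R}$ is exactly right, and (ii) the measurability bookkeeping in the last step when the losses fed to \hedge are random rather than adversarially fixed, which is precisely why the lemma is stated with expectations on both sides.
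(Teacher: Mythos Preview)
Your proposal is correct and follows essentially the same potential-function argument as the paper's proof: the same weights $w_t(a)$ and sum $W_t$, the same convexity-based chord inequality $\e^{-\eta x}\le 1-\tfrac{1-\e^{-\eta R}}{R}x$ on $[0,R]$, and the same lower bound $W_{T+1}\ge w_{T+1}(a^*)$, with the paper using $\ln(1-x)\le -x$ where you equivalently use $1-z\le\e^{-z}$. Your remarks on the conditioning step are slightly more explicit than the paper's, but the substance is identical.
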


\begin{proof}%[Proof of Lemma~\ref{lem: hedge bound approx}]
    Let $w_{1}(a)=1$ for all $a\in\calS$ and define $w_{t+1}(a) = w_{t}(a)\e^{-\eta \ell_t(a)}$. Also, define $W_t=\sum_{a\in\calS} w_t(a)$. Then clearly, $p_t(a)\propto w_t(a)$ and $p_t(a)=w_t(a)/W_t$. 
    Using these definitions, we have 

    \begin{align*}
        \ln\frac{W_{T+1}}{W_1}=\sum_{t=1}^T \ln\frac{W_{t+1}}{W_t} = \sum_{t=1}^T \ln\frac{\sum_{a\in\calS} w_t(a)\e^{-\eta \ell_t(a)}}{W_t}=\sum_{t=1}^T \ln \left(\sum_{a\in\calS} p_t(a)\e^{-\eta \ell_t(a)}\right). 
    \end{align*}
    Since $\ell_t(a)\in[0,R]$ and $\e^{-\eta R x}$ is convex in $x$, we have
    \begin{align*}
        \e^{-\eta \ell_t(a)}\leq \frac{\ell_t(a)}{R}\e^{-\eta R} + \left(1-\frac{\ell_t(a)}{R}\right) = 1-\frac{1-\e^{-\eta R}}{R}\ell_t(a). 
    \end{align*}
    Thus, 
    \begin{align*}
        \ln \left(\sum_{a\in\calS} p_t(a)\e^{-\eta \ell_t(a)}\right) \leq \ln \left(1 - \frac{1-\e^{-\eta R}}{R}\sum_{a\in\calS} p_t(a)\ell_t(a)  \right)\leq - \frac{1-\e^{-\eta R}}{R}\sum_{a\in\calS} p_t(a)\ell_t(a),
    \end{align*}
    because $\ln (1-x)\leq -x$ for $x\geq 0$. 
    On the other hand, for any $a^* \in \calS$,
    \begin{align*}
        \ln \frac{W_{T+1}}{W_1} \geq  \ln \frac{w_{T+1}(a^*)}{W_1}  \geq \ln \frac{\e^{-\eta \sum_{t=1}^T  \ell_t(a^*)}}{|\calS|} = -\eta \sum_{t=1}^T \ell_t(a^*) - \ln |\calS|. 
    \end{align*}
    Combining both inequalities, we get 
    \begin{align*}
        \sum_{t=1}^T \sum_{a\in\calS} p_t(a)\ell_t(a) 
        \leq \frac{\eta R}{1-\e^{-\eta R}}\min_{a^* \in \calS}\sum_{t=1}^T \ell_t(a^*) + \frac{R\ln |\calS|}{1-\e^{-\eta R}}.  
    \end{align*}
    Taking expectation on both sides finishes the proof.
\end{proof}

\section{The Full-information Setting}
\label{sec:full-information}
%!TEX root=main.tex

In this section, we consider two special cases in the full-information
setting; we obtain approximate regret bounds whose approximation ratio
%\dkdelete{(i.e., the $\alpha$ in Eq.~\eqref{eqn: regret result})}
depends only on $K$, the maximum cardinality of $\calA_t$.
These two special cases are the following:
%\begin{enumerate}
1) in each round $t$, exactly one action has loss 0, i.e., for all $t$,
  $\sum_{a\in \calA_t} \one[\ell_t(a)=0] = 1$,
and 2) in each round $t$, exactly two actions have loss 0, i.e., for all $t$,
  $\sum_{a\in \calA_t} \one[\ell_t(a)=0] = 2$.
%\end{enumerate}
We remark again that these structures correspond to problems with sparse rewards, studied in previous work as well~\citep{kwon2016gains, bubeck2018sparsity}.

The first case is reminiscent of multi-class classification with 0-1 loss: there is only one ``label'' that is correct and incurs zero loss; other labels all incur a loss of one. In a typical classification problem, the learner uses \emph{features} as side information to infer labels; in our problem, we may view the available action set $\calA_t$ as the side information.  
For this case, in Section~\ref{subsec:hedgetree}, we give an algorithm called \hedgetree (Hedges Aggregated with Tournament Trees) which guarantees that the total loss of the learner is upper-bounded by $\order(\log_2 K)L^* + \order(N^2)$.

For the second case, in Section~\ref{subsec:hedgevote}, we design another (more involved) algorithm called \hedgevote (Hedges Over Pairs of Pairs) whose loss is upper-bounded by $\order(K^2) L^* + \order(N^4)$. Note that we get a worse approximation ratio in this case compared to the first case.
%\dkdelete{When $\sum_{a\in \calA_t} \one[\ell_t(a)=0] \in \{1, 2\}$, i.e., the number of zero-loss actions is 1 or 2 in each round, a slight extension of our algorithm can also achieve an upper bound of $\order(K^2) L^* + \order(N^4)$ upper bound. Because the changes are mostly technical, we defer this result to the appendix. \hlcomment{Better to give a concrete section number.}}

When the number of possible zero-loss actions %$\sum_{a\in\calA_t}\one[\ell_t(a)=0]$ 
exceeds $2$, it is not clear how to efficiently obtain an approximate regret bound where $\alpha$ is a function of $K$ and $\beta(T)$ is polynomial in $K$.  
However, an approximation ratio of $N$ is still achievable, even in the bandit setting, as shown in Section~\ref{sec:bandit}. 

The algorithms in Sections~\ref{subsec:hedgetree} and \ref{subsec:hedgevote} are based on similar ideas. They maintain several sub-algorithms, each dealing with a constant-size sub-problem (e.g., a 2-expert algorithm that compares the performance of actions $i,j$ in the rounds when they are both available). Then, when given $\calA_t$, a meta-algorithm aggregates the recommendations of these sub-algorithms and generates the final $a_t\in\calA_t$. The design of the sub-problems and their losses has the following two key properties: 

\begin{property}\label{prop:mistake}
   Whenever the learner makes a mistake (i.e., $\ell_t(a_t)=1$), there is at least one sub-algorithm which also makes a mistake in its sub-problem.
\end{property}   

\begin{property}\label{prop:no_mistake}
   Whenever the best ranking $\sigma$ makes no mistake (i.e., $\ell_t(\sigma(\calA_t))=0$), it also makes no mistake for all of the defined sub-problems. 
\end{property}   
   
These two properties are sufficient to ensure that algorithms with sub-linear regret for the sub-problems also guarantee good approximate regret bounds for the original problem.

\subsection{The \hedgetree Algorithm for One Zero-Loss Action}
\label{subsec:hedgetree}
%!TEX root=main.tex

\begin{figure}[t]
\begin{algorithm}[H]
     \caption{\hedgetree (Hedges Aggregated with Tournament Trees)}
     \label{alg:hedge-tournament}
     \DontPrintSemicolon
%    \SetAlgoNoEnd
%     \textbf{Initialization:}\\
     \lForAll{$i < j$}{
        set
%     \begin{align*}
     $p_{1}^{i,j}(i) =  p_{1}^{i,j}(j) = \frac{1}{2}$.
%     \end{align*}
   }
   
     \For{$t=1, \ldots, T$}{
       Receive $\calA_t$ and let $(a_t, U_t) = \tournament (\calA_t, (p_{t}^{i,j})_{i,j})$.
%         \begin{align*}
%             &\left(a_t, U_t^{1,2}, U_t^{1,3}, \ldots, U_t^{N-1,N} \right)\\
%             &\qquad = \tournament\left(\calA_t, p_t^{1,2}, p_t^{1,3}, \ldots, p_t^{N-1,N}\right). 
%           \end{align*}
       
       Choose $a_t$ and suffer loss $\ell_{t}(a_t)$.
         
       Learn $\ell_{t}(a)$ for all $a\in\calA_t$ and let $z_t$ be such that $\ell_t(z_t) = 0$.
       
       \lForAll{$i$ with $\{i, z_t\} \in U_t$}{
         $c_t^{i,z_t}(i) = 1$, \ $c_t^{i,z_t}(z_t)=0$, \ 
         $p_{t+1}^{i,z_t} = \updatehedge\left(p_t^{i,z_t}, c_t^{i,z_t}\right)$. 
       }
              
       \lForAll{other $i < j$}{
         let $c_t^{i,j}(\cdot)=0$ and $p_{t+1}^{i,j}=p_t^{i,j}$.
       }
     } 
\end{algorithm}

\begin{algorithm}[H]
    \caption{\tournament}
    \label{alg:selection-rule-1}
    \DontPrintSemicolon
%    \SetAlgoNoEnd
    \KwIn{
%    \begin{align*}
      $\calA_t$: available action set at time $t$
      
      $P_t = (p_{t}^{i,j})_{i,j}$: distributions of hedges over all pairs $\{i,j\}$
       %      \end{align*}
    }
    \textbf{Initialization:} $U_t = \emptyset$.
    
    \lForAll{$i < j$}{sample $a_t^{i,j}\sim p_t^{i,j}$.}
%    \begin{itemize}

    Let $\calT$ be a balanced binary tree with exactly $|\calA_t|$ leaves, each mapped to a distinct action in $\calA_t$.
    
%    Choose an (arbitrary) perfect matching between the leaves of $\calT$ and $\calA_t$.
    
    \lForEach{leaf $v$}{
      let $\winner(v)$ be the action $v$ is mapped to.
    }
    
    \ForEach{internal node $v$, in bottom-up order}{
%        \begin{itemize}
         \lIf{$v$ has one child $v'$}{set $\winner(v) = \winner(v')$.}
         \lElse{let $i, j$ be the winners at the two children of $v$; set $\winner(v) = a_t^{i,j}$, and add $\{i, j\}$ to $U_t$.
           }
     }
%        \end{itemize}
%    \end{itemize}
    
    \Return{$\winner(\rt \text{ of } \calT)$, $U_t$.}
\end{algorithm}
\end{figure}

We begin with an algorithm for the case of a single zero-loss action per round.
Recall that the sleeping experts algorithm by \cite{kleinberg2010regret} is based on the idea of ``hedging over all rankings'' --- that is, viewing each ranking of actions as an ``expert'' in \hedge. This leads to (exact) regret bounds with respect to the best ranking, but requires keeping track of $N!$ experts in total. %, making the overall algorithm inefficient. 
Instead of keeping track of an expert for each permutation, our algorithm only maintains one expert for each pair of actions. This results in a coarser representation, but we show that it still achieves good guarantees. In other words, while \cite{kleinberg2010regret} maintains one algorithm that learns over exponentially many experts, we maintain ${N \choose 2}$ \hedge algorithms, each learning over \textit{two} actions. Then, a meta algorithm combines the recommendations of all 2-expert \hedge algorithms and decides on the final action the learner should choose. 

To learn the preference between the pair of actions $\{i,j\}\subset [N]$ with $i\neq j$, \hedgetree simply runs an instance $\calH_{i,j}$ of \hedge (Algorithm~\ref{alg:hedge}) with $\calS=\{i,j\}$.
\hedgetree then uses the following \textit{tournament} approach as the meta algorithm to combine the recommendations of all \hedge algorithms. In each round $t$, \hedgetree creates a single-elimination tournament tree %\footnote{Think Wimbledon!} 
$\calT_t$ with $|\calA_t|$ leaves, and thus depth $1 + \lceil \log_2 (|\calA_t|) \rceil$. It assigns each element in $\calA_t$ to one leaf of $\calT_t$ (arbitrarily). Then the actions perform a single-elimination tournament following $\calT_t$ to generate the final winner $a_t$. For each pair of actions $(i,j)$, the winner and loser are determined by the \hedge algorithm $\calH_{i,j}$. Notice that each action is involved in at most $\log_2 K$ comparisons in each round. We will show that this is the regret approximation ratio of \hedgetree.

More formally, in Algorithm~\ref{alg:hedge-tournament},  $p_t^{i,j}$ denotes the $p_t$ maintained by the \hedge instance $\calH_{i,j}$; we use $p_{t}^{i,j}(i)$ and $p_{t}^{i,j}(j)$ to denote the probabilities for the actions $i$ and $j$, respectively.
Note that $p_t^{i,j}$ is shorthand for $p_t^{\{i,j\}}$, so $p_t^{i,j}$ and $p_t^{j, i}$ are always the same, and we only run one instance of \hedge for each pair $\{i,j\}$ (similarly for the notation $c_t^{i,j}$ and $a_t^{i,j}$ below).
In Algorithm~\ref{alg:selection-rule-1}, each \hedge instance $\calH_{i,j}$ samples a winner $a_t^{i,j}$ according to $p_t^{i,j}$, and a tournament is run.
In this process, a set $U_t$ is used to record all pairs involved in the tournament.

After choosing the final winner $a_t$ of the tournament, \hedgetree receives the loss feedback. We let $z_t$ denote the unique zero-loss action; hence, for all $a\in \calA_t \setminus \{z_t\}$, the loss is $\ell_t(a)=1$.
Then, for all pairs in $U_t$ that involve $z_t$, the algorithm updates the corresponding \hedge instance with the natural loss vector: action $z_t$ has loss $0$, and the other action has loss $1$.
For all other pairs $\{i,j\}$, the algorithm does not make any updates, although for notational convenience in the analysis, we still define a loss vector $c_t^{i,j}$ to be the all-zero vector, so that $p_{t+1}^{i,j} = p_{t}^{i,j} = \updatehedge\big(p_t^{i,j}, c_t^{i,j}\big)$ holds.

%\dkedit{We now describe the algorithm more formally.}
%We let $z_t$ denote the zero-loss action; hence, for all $a\in \calA_t \setminus \{z_t\}$, the loss is $\ell_t(a)=1$. 
%In Algorithm~\ref{alg:hedge-tournament}, $p_t^{i,j}$ denotes the $p_t$ maintained by the \hedge instance $\calH_{i,j}$; we use $p_{t}^{i,j}(i)$, $p_{t}^{i,j}(j)$ to denote the probabilities for the actions $i$ and $j$, respectively (see Algorithm~\ref{alg:hedge-tournament} and \ref{alg:selection-rule-1} for details). Note that \hedgetree only runs \emph{one} \hedge for each pair $\{i,j\}$, so $p_t^{i,j}$ and $p_t^{j,i}$ refer to the \emph{same} instance of \hedge. In other words, $p_t^{i,j}(i)=p_t^{j,i}(i)$ and $p_t^{i,j}(j)=p_t^{j,i}(j)$ for all $t, i, j$. 
%\dkedit{In the algorithm, $U_t=(u_t^{i,j}(i)) \in \{0,1\}^{N \times N}$ is a matrix of pairwise comparison outcomes, and $P_t=(p_t^{i,j}(i)) \in [0,1]^{N \times N}$ the matrix of all \hedge probabilities.}

The performance of \hedgetree is summarized in the following theorem: 

\begin{theorem} \label{theorem: HATT}
    \hedgetree (Algorithm~\ref{alg:hedge-tournament}) guarantees that
    \begin{align*}
\E\left[\sum_{t=1}^T \ell_t(a_t)\right]
      \leq  \frac{\eta (1+\lceil \log_2(K) \rceil) }{1-\e^{-\eta}} \E\left[\min_\sigma \sum_{t=1}^T \ell_t(\sigma(\calA_t))\right] + {N\choose 2}\cdot \frac{\ln 2}{(1-\e^{-\eta})}.  
    \end{align*}
    In particular, when $\eta=1$, the above is no more than
$
        \order(\log_2(K)) \cdot \E\left[\sum_{t=1}^T \ell_t(\sigma(\calA_t))\right] + \order\left( N^2 \right). 
$
    
\end{theorem}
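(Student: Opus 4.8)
The plan is to prove the theorem by verifying the two structural properties (Property~\ref{prop:mistake} and Property~\ref{prop:no_mistake}) for the sub-problems defined in \hedgetree, and then to combine the \hedge regret guarantees (Lemma~\ref{lem: hedge bound approx} with $R=1$) over all ${N\choose 2}$ pairs. First I would establish Property~\ref{prop:mistake}: if the learner suffers loss at round $t$, i.e.\ $\ell_t(a_t)=1$, then $a_t\neq z_t$, so the winner of the tournament is not the zero-loss action. Since $z_t$ is available, it occupies some leaf of $\calT_t$; trace the path from that leaf up to the root. At some internal node $v$ on this path, $z_t$ is the winner of one child but loses the match at $v$; that match is between $z_t$ and some $i$, the pair $\{i,z_t\}$ is added to $U_t$, and the \hedge instance $\calH_{i,z_t}$ sampled $i$ (the loss-$1$ action) as its winner $a_t^{i,z_t}$ — hence that sub-algorithm incurs loss $c_t^{i,z_t}(a_t^{i,z_t})=1$ in its sub-problem. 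So every mistake of the learner is charged to a mistake of at least one sub-algorithm.

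Next I would establish Property~\ref{prop:no_mistake}: if the best ranking $\sigma$ has $\ell_t(\sigma(\calA_t))=0$, then $\sigma(\calA_t)=z_t$, i.e.\ $z_t$ is $\sigma$'s top-ranked available action. For any pair $\{i,z_t\}$ with $i\in\calA_t$, $\sigma$ prefers $z_t$ to $i$ (since $z_t$ beats all of $\calA_t$), so in sub-problem $\calH_{i,z_t}$ the ``action $\sigma$ would pick'' is $z_t$, which has loss $c_t^{i,z_t}(z_t)=0$ in that round. For every other pair, the sub-problem loss vector $c_t^{i,j}$ is identically zero, so $\sigma$ trivially makes no mistake there either. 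Thus on rounds where $\sigma$ is correct, none of the sub-problems charge $\sigma$ any loss.

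Now for the aggregation. By Property~\ref{prop:mistake}, $\sum_t \ell_t(a_t) \le \sum_t \sum_{i<j} \one[\text{$\calH_{i,j}$ errs at $t$}] = \sum_{i<j}\sum_t c_t^{i,j}(a_t^{i,j})$ (using that the dummy pairs contribute zero). Taking expectations and applying Lemma~\ref{lem: hedge bound approx} to each instance $\calH_{i,j}$ with $R=1$, $|\calS|=2$:
\begin{align*}
\E\left[\sum_{t=1}^T \ell_t(a_t)\right] \le \sum_{i<j}\left( \frac{\eta}{1-\e^{-\eta}} \E\left[\min_{a\in\{i,j\}}\sum_{t=1}^T c_t^{i,j}(a)\right] + \frac{\ln 2}{1-\e^{-\eta}}\right).
\end{align*}
For the ${N\choose 2}$ constant terms this gives exactly ${N\choose 2}\cdot\frac{\ln 2}{1-\e^{-\eta}}$. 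For the comparator terms, fix any ranking $\sigma$ and bound $\min_{a\in\{i,j\}}\sum_t c_t^{i,j}(a) \le \sum_t c_t^{i,j}(\text{$\sigma$'s choice between $i,j$})$; by Property~\ref{prop:no_mistake}, this last quantity is nonzero only on rounds $t$ where $\ell_t(\sigma(\calA_t))=1$, and each such round contributes at most $1$ to each of the at most $1+\lceil\log_2 K\rceil$ pairs in $U_t$ (the number of matches on any root-to-leaf path). Summing over all pairs, $\sum_{i<j}\sum_t c_t^{i,j}(\sigma\text{-choice}) \le (1+\lceil\log_2 K\rceil)\sum_t \ell_t(\sigma(\calA_t))$. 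Taking expectation, minimizing over $\sigma$ on the right, and assembling the pieces yields the stated bound; setting $\eta=1$ makes $\frac{\eta}{1-\e^{-1}}=\order(1)$, giving the $\order(\log_2 K)\cdot\E[\sum_t\ell_t(\sigma(\calA_t))] + \order(N^2)$ form. The main obstacle is the careful double-counting in the comparator term: one must argue that a single mistake-round of $\sigma$ is charged to at most $1+\lceil\log_2 K\rceil$ sub-problems (those appearing on the path where $\sigma(\calA_t)$ would have to traverse the tree, or more simply, the total number of matches any fixed action plays), and that the $\min$ inside each \hedge bound can be simultaneously relaxed to the \emph{same} global $\sigma$ before minimizing — which is valid since the inequality holds for every $\sigma$ pointwise.
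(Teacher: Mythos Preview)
Your proof is correct and follows the same three-step structure as the paper's own proof (stated there as Lemmas~\ref{lemma: algorithm 1 ineq 1}--\ref{lemma: alorithm 1 ineq 3}). One technical point the paper makes explicit that you gloss over when invoking Lemma~\ref{lem: hedge bound approx}: the loss $c_t^{i,j}$ must be chosen independently of the sample $a_t^{i,j}$ (even though it may depend on other $a_t^{i',j'}$ via $U_t$); this holds because whether $\{i,z_t\}\in U_t$ is determined only by the matches that bring $i$ and $z_t$ to a common node, not by $a_t^{i,z_t}$ itself.
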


Note that the approximation ratio is only logarithmic in $K$, and the additive regret term is also independent of $T$. The proof of Theorem~\ref{theorem: HATT} can be obtained by directly combining the following three lemmas ($a_t^{i,j}$ and $c_t^{i,j}$ are as defined in Algorithm~\ref{alg:selection-rule-1} and Algorithm~\ref{alg:hedge-tournament}, respectively).
Lemmas~\ref{lemma: algorithm 1 ineq 1} and~\ref{lemma: alorithm 1 ineq 3} assert that \hedgetree ensures Properties~\ref{prop:mistake} and~\ref{prop:no_mistake}, respectively.

\begin{lemma}\label{lemma: algorithm 1 ineq 1}
    In Algorithm~\ref{alg:hedge-tournament}, whenever the learner makes a mistake (i.e., $\ell_t(a_t)=1$), there must be a \hedge algorithm which also makes a mistake. More formally, for every $t$, 
    \begin{align*}
        \ell_t(a_t) \leq \sum_{i < j} c_t^{i,j}(a_t^{i,j}).  
    \end{align*}
\end{lemma}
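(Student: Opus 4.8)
The plan is to track the unique zero-loss action $z_t$ as it moves through the tournament tree $\calT_t$. First I would record two trivial reductions: under the one-zero-loss assumption of this section $z_t$ exists and lies in $\calA_t$, so it is mapped to some leaf of $\calT_t$; and since every term $c_t^{i,j}(a_t^{i,j})$ equals $0$ or $1$, the claimed inequality is immediate when $\ell_t(a_t)=0$. Hence it suffices to treat the case $\ell_t(a_t)=1$, i.e.\ $a_t\neq z_t$, and exhibit a single pair $\{i,j\}$ with $c_t^{i,j}(a_t^{i,j})=1$.

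Next I would walk up the path from $z_t$'s leaf to the root $\rt$ of $\calT_t$. The leaf has $\winner$ equal to $z_t$, while $\winner(\rt)=a_t\neq z_t$, so there is a first node $v$ on this path with $\winner(v)\neq z_t$; this $v$ is not the leaf, so it has a child $v'$ on the path, and by minimality $\winner(v')=z_t$. The node $v$ cannot be a one-child internal node, since those set $\winner(v)=\winner(v')=z_t$; hence $v$ has two children, whose winners are $z_t$ (from the subtree containing $v'$) and some action $j$. Here $j\neq z_t$: by an easy induction each subtree's winner is one of that subtree's leaf-actions, the two subtrees of $v$ are disjoint, and $z_t$ sits in only one of them. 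By the tournament rule, at $v$ the algorithm sets $\winner(v)=a_t^{z_t,j}$ and adds $\{z_t,j\}$ to $U_t$; since $\winner(v)\neq z_t$, this forces $a_t^{z_t,j}=j$.

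Finally I would invoke the update step of \hedgetree: because $\{j,z_t\}\in U_t$ with $j\neq z_t$, the algorithm sets $c_t^{j,z_t}(j)=1$ (and $c_t^{j,z_t}(z_t)=0$), and using the shorthand $c_t^{z_t,j}=c_t^{j,z_t}$, $a_t^{z_t,j}=a_t^{j,z_t}$ we get $c_t^{z_t,j}(a_t^{z_t,j})=c_t^{z_t,j}(j)=1$. Since all summands are nonnegative, $\sum_{i<j}c_t^{i,j}(a_t^{i,j})\geq c_t^{z_t,j}(a_t^{z_t,j})=1=\ell_t(a_t)$, as required.

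There is no real obstacle here; the argument is just bookkeeping on the single-elimination bracket. The two points that need care are invoking the section's assumption to ensure $z_t\in\calA_t$, and handling one-child internal nodes when locating the match in which $z_t$ is eliminated: at such a node no comparison happens and no pair enters $U_t$, so $z_t$ can only be knocked out at a genuine two-child match, which is exactly the match that produces the nonzero term. (In fact equality holds, since $z_t$ loses at most once in the bracket, but only the stated direction is used later.)
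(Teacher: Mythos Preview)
Your proof is correct and follows the same approach as the paper: both handle $\ell_t(a_t)=0$ trivially and, when $\ell_t(a_t)=1$, locate the match in which $z_t$ is eliminated to produce a pair $\{i,z_t\}\in U_t$ with $a_t^{i,z_t}=i$ and hence $c_t^{i,z_t}(a_t^{i,z_t})=1$. The paper simply compresses your explicit leaf-to-root walk into the single phrase ``by the tournament approach,'' whereas you spell out the one-child case and the first-change argument in detail.
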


\begin{proof}
    If $\ell_t(a_t)=0$, then the inequality clearly holds. If $\ell_t(a_t)=1$, by the tournament approach, there must exist an $i\neq z_t$ with $\{i, z_t\} \in U_t$ and $a_t^{i,z_t}=i$. Thus we have
    \begin{align*}
        c_t^{i,z_t}(a_t^{i,z_t}) 
        = c_t^{i,z_t}(i) 
        = \ell_t(i)\one[\{i, z_t\} \in U_t] = 1. 
    \end{align*}
    Thus the inequality also holds when $\ell_t(a_t)=1$. 
\end{proof}
  
\begin{lemma}\label{lemma: alorithm 1 ineq 2}
Algorithm~\ref{alg:hedge-tournament} guarantees that for all $i < j$,
\begin{align*}
    \scalebox{0.9}{$\displaystyle\E\left[\sum_{t=1}^T c_{t}^{i,j}(a_t^{i,j})\right]  \leq \frac{\eta}{1-\e^{-\eta}} \E\left[\min_\sigma \sum_{t=1}^T c_{t}^{i,j}(\sigma(i,j))\right] + \frac{\ln 2}{1-\e^{-\eta}}.  $}
\end{align*}
\end{lemma}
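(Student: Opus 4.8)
The plan is to obtain this bound directly from Lemma~\ref{lem: hedge bound approx} applied to the single \hedge instance $\calH_{i,j}$. Note that $\calH_{i,j}$ is exactly Algorithm~\ref{alg:hedge} run with action set $\calS=\{i,j\}$, loss range $R=1$, and the loss sequence $(c_t^{i,j})_{t=1}^T$: in a round with $\{i,j\}\in U_t$ it is fed the $0/1$ loss vector that puts $0$ on $z_t$ (when $z_t\in\{i,j\}$) and $1$ on the other action, while in every other round it is fed the all-zero vector, on which \updatehedge acts as the identity, so that $p_{t+1}^{i,j}=\updatehedge(p_t^{i,j},c_t^{i,j})$ holds for all $t$. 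Substituting $R=1$ and $|\calS|=2$ into Lemma~\ref{lem: hedge bound approx} then gives
\[
\E\Bigl[\textstyle\sum_{t=1}^T c_t^{i,j}(a_t^{i,j})\Bigr]\le \frac{\eta}{1-\e^{-\eta}}\,\E\Bigl[\min_{a^*\in\{i,j\}}\textstyle\sum_{t=1}^T c_t^{i,j}(a^*)\Bigr]+\frac{\ln 2}{1-\e^{-\eta}} .
\]

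To finish, I would observe that for every ranking $\sigma$ the action $\sigma(i,j)$ is a round-independent element of $\{i,j\}$, and conversely each of $i$ and $j$ equals $\sigma(i,j)$ for some ranking (rank it first); hence $\min_{\sigma}\sum_{t}c_t^{i,j}(\sigma(i,j))=\min_{a^*\in\{i,j\}}\sum_{t}c_t^{i,j}(a^*)$ holds pathwise, and plugging this identity into the display yields precisely the claimed inequality. The only delicate point --- and the step I expect to take the most care --- is that invoking Lemma~\ref{lem: hedge bound approx} is legitimate here, even though the loss sequence $(c_t^{i,j})_t$ handed to $\calH_{i,j}$ is generated by the randomized subroutine \tournament rather than by an oblivious adversary.

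I would handle this as follows. The proof of Lemma~\ref{lem: hedge bound approx} first derives a pathwise bound on $\sum_t\langle p_t^{i,j},c_t^{i,j}\rangle$ and then replaces $\langle p_t^{i,j},c_t^{i,j}\rangle$ by $c_t^{i,j}(a_t^{i,j})$ under the expectation; this last step is valid provided $c_t^{i,j}$ is conditionally independent of the drawn winner $a_t^{i,j}$ given the history. That conditional independence follows from a simple property of single-elimination tournaments: the leaves of $\calT_t$ holding $i$ and $j$ have a unique least common ancestor $v$, and $\{i,j\}$ is added to $U_t$ if and only if $i$ wins all its matches strictly below $v$ on one side of $v$ and $j$ wins all its matches strictly below $v$ on the other side; none of those matches is the match between $i$ and $j$, so the event $\{i,j\}\in U_t$, and hence the whole vector $c_t^{i,j}$ (a deterministic function of $\calA_t$, $\ell_t$, and $\one[\{i,j\}\in U_t]$), is measurable with respect to $\calA_t$, $\ell_t$, and the winners $a_t^{i',j'}$ of pairs $\{i',j'\}\neq\{i,j\}$ --- all independent of $a_t^{i,j}$ given the past. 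Conditioning on exactly this information, $a_t^{i,j}$ is still distributed according to $p_t^{i,j}$, so $\E[c_t^{i,j}(a_t^{i,j})\mid\cdot]=\langle p_t^{i,j},c_t^{i,j}\rangle$; summing over $t$ and taking expectations gives the display above. Equivalently, one can simply repeat the short pathwise argument of Lemma~\ref{lem: hedge bound approx} directly for $\calH_{i,j}$ and invoke this conditional-independence fact at the very end.
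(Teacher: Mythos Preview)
Your proposal is correct and follows the same approach as the paper: apply Lemma~\ref{lem: hedge bound approx} with $R=1$, $\calS=\{i,j\}$, after noting that $c_t^{i,j}$ is determined independently of the draw $a_t^{i,j}$. The paper dispatches this independence in one sentence, whereas you spell out the tournament/LCA reason and the identification $\min_\sigma\sum_t c_t^{i,j}(\sigma(i,j))=\min_{a^*\in\{i,j\}}\sum_t c_t^{i,j}(a^*)$; both are welcome clarifications but not a different argument.
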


\begin{proof}
Note that importantly, the value of $c_t^{i,j}$ is decided independently of $a_t^{i,j}$ (although it could depend on other $a_t^{i',j'}$).
We can therefore apply Lemma~\ref{lem: hedge bound approx} with $R=1$ and $\calS=\{i,j\}$, which proves the lemma.
\end{proof}

\begin{lemma}\label{lemma: alorithm 1 ineq 3}
    Algorithm~\ref{alg:hedge-tournament} guarantees that for all rankings $\sigma$,  
    \begin{align*}
        \sum_{i < j} c_t^{i,j}(\sigma(i,j)) \leq (1+\lceil \log_2(K) \rceil) \cdot \ell_t(\sigma(\calA_t)). 
    \end{align*}
\end{lemma}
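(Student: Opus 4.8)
The plan is to rewrite the left-hand side as a count of certain pairs and then split into two cases according to whether the best ranking $\sigma$ errs in round $t$.

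First I would unpack the definition of the loss vectors from Algorithm~\ref{alg:hedge-tournament}: $c_t^{i,j}(\cdot)\equiv 0$ unless $\{i,j\}\in U_t$ and one of $i,j$ equals $z_t$, and for $\{i,z_t\}\in U_t$ we have $c_t^{i,z_t}(i)=1$ and $c_t^{i,z_t}(z_t)=0$. Hence $c_t^{i,j}(\sigma(i,j))=1$ precisely when $\{i,j\}=\{i,z_t\}\in U_t$ and $\sigma$ ranks $i$ above $z_t$ (i.e.\ $m_{\sigma}(i)<m_{\sigma}(z_t)$), and it is $0$ otherwise. Therefore
$$\sum_{i<j} c_t^{i,j}(\sigma(i,j)) = \bigl|\{\, i : \{i,z_t\}\in U_t \text{ and } m_{\sigma}(i)<m_{\sigma}(z_t) \,\}\bigr|.$$
I would also record one structural fact: every action appearing in some pair of $U_t$ lies in $\calA_t$. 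This follows by a bottom-up induction on the tournament tree $\calT_t$ of Algorithm~\ref{alg:selection-rule-1}: each $\winner(v)$ belongs to $\calA_t$, and every pair added to $U_t$ consists of two winners.

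Case 1: $\ell_t(\sigma(\calA_t))=0$. Since $z_t$ is the \emph{unique} zero-loss action in round $t$ and $z_t\in\calA_t$, the highest-$\sigma$-ranked available action must be $z_t$, i.e.\ $\sigma(\calA_t)=z_t$. By the structural fact, any $i$ with $\{i,z_t\}\in U_t$ lies in $\calA_t$, so $m_{\sigma}(i)>m_{\sigma}(z_t)$; the set counted above is empty, and both sides of the claimed inequality equal $0$.

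Case 2: $\ell_t(\sigma(\calA_t))=1$, so the right-hand side equals $1+\lceil\log_2 K\rceil$; it suffices to bound the number of pairs in $U_t$ that contain $z_t$. Each such pair is created at an internal node of $\calT_t$ at which $z_t$ is one of the two competing winners, and $z_t$ can be a competing winner only at internal nodes on the path from its leaf to the root, and only up to (and including) the first node at which it loses, after which it is eliminated and never reappears as a winner; internal nodes with a single child create no pair. Hence the number of such pairs is at most the number of internal nodes on a root-to-leaf path in $\calT_t$, which is at most the depth $1+\lceil\log_2|\calA_t|\rceil\le 1+\lceil\log_2 K\rceil$, matching the right-hand side. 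The only step needing care is this last one: making precise that in a single-elimination bracket a competitor participates in at most one comparison per level it survives and in none thereafter; the rest is bookkeeping with the definition of $c_t^{i,j}$ and the observation that $\sigma(\calA_t)=z_t$ whenever the best ranking is correct.
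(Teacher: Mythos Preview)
Your proof is correct and follows essentially the same approach as the paper: the same two-case split on $\ell_t(\sigma(\calA_t))$, using uniqueness of $z_t$ to force $\sigma(\calA_t)=z_t$ in Case~1 and bounding the number of $U_t$-pairs containing $z_t$ by the tree depth in Case~2. You supply a bit more detail (the structural fact that $U_t$-actions lie in $\calA_t$, and the elimination argument along $z_t$'s root path), but the argument is the paper's.
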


\begin{proof}
    If $\ell_t(\sigma(\calA_t))=0$, then $\sigma(\calA_t)=z_t$ (i.e., $\sigma$ ranks $z_t$ first among $\calA_t$), and thus $\sigma(i,z_t)=z_t$ for all $i\in\calA_t$. Therefore, 
    \begin{align*}
        \sum_{i < j} c_t^{i,j}(\sigma(i,j)) 
        &= \sum_{i\in \calA_t, i\neq z_t} c_t^{i,z_t}(\sigma(i,z_t)) = \sum_{i\in \calA_t, i\neq z_t} c_t^{i,z_t}(z_t)=0. 
    \end{align*}
    If $\ell_t(\sigma(\calA_t))=1$, then
    \begin{align*}
        \sum_{i< j} c_t^{i,j}(\sigma(i,j)) 
        \leq \sum_{i\in\calA_t i\neq z_t} \one[\{i, z_t\} \in U_t] \leq 1+\lceil \log_2(K) \rceil. 
    \end{align*}
    In both cases, 
    \begin{align*}
        \sum_{i< j} c_t^{i,j}(\sigma(i,j))\leq (1+\lceil \log_2(K) \rceil) \cdot \ell_t(\sigma(\calA_t)). 
    \end{align*}
  \end{proof}

We are now ready to prove the theorem.

\begin{proof}[of Theorem~\ref{theorem: HATT}]
     We apply Lemmas~\ref{lemma: algorithm 1 ineq 1}, \ref{lemma: alorithm 1 ineq 2}, \ref{lemma: alorithm 1 ineq 3} successively: 
     \begin{align*}
\E\left[ \sum_{t=1}^T \ell_t(a_t) \right] 
          &\leq \E\left[ \sum_{i < j}\sum_{t=1}^T   c_t^{i,j}(a_t^{i,j}) \right] \\
          &\leq \frac{\eta}{1-\e^{-\eta}}\E\left[  \min_\sigma \sum_{i < j} \sum_{t=1}^T c_t^{i,j}(\sigma(i,j)) \right] + \sum_{i < j} \frac{\ln 2}{1-\e^{-\eta}}  \\
          &\leq  \frac{\eta (1+\lceil \log_2(K) \rceil)}{1-\e^{-\eta}}  \E\left[\min_\sigma\sum_{t=1}^T  \ell_t(\sigma(\calA_t))\right]
           + {N \choose 2} \cdot \frac{\ln 2}{1-\e^{-\eta}}. 
     \end{align*}
     This completes the proof.
\end{proof}
   
Finally, we point out that even in this simple case with one zero-loss action, achieving no-regret performance (i.e. $\alpha=1$) is still as hard as PAC-learning DNFs, as shown below.

\begin{theorem}\label{thm:hardness}
If there exists a computationally efficient no-regret algorithm for the sub-class of sleeping expert problems which always have exactly one zero-loss action, then there exists a computationally efficient algorithm for PAC-learning DNFs under arbitrary distributions. 
\end{theorem}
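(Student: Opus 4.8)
The plan is to reduce PAC-learning DNF (under arbitrary distributions) to the hypothesized efficient no-regret algorithm, following the template of \citet{kanade2014learning} but inserting a small gadget that forces exactly one zero-loss action in every round. Fix a target DNF $\phi$ over $\{0,1\}^n$ and an unknown distribution $D$; we receive i.i.d.\ labeled examples $(x,\phi(x))$ and must output, in $\poly(n,1/\epsilon,\log(1/\delta))$ time, a hypothesis $h$ with $\Pr_{x\sim D}[h(x)\neq\phi(x)]\le\epsilon$. We will construct \emph{on the fly} a stream of sleeping-expert rounds over a $\poly(n)$-sized action set $[N]$, feed it to the assumed learner, watch which actions it plays, and decode those plays into $h$.

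\textbf{The reduction map.} For each fresh example $(x_t,y_t)$ we emit a round with available set $\calA_t=A(x_t)\subseteq[N]$ depending only on $x_t$ (and, crucially, not on the unknown $\phi$) and a binary loss vector $\ell_t$ on $\calA_t$ depending only on $(x_t,y_t)$. Following \citet{kanade2014learning}, these maps — together with a fixed decoder $\mathrm{dec}:\{0,1\}^n\times[N]\to\{0,1\}$ — are to satisfy: (i) \emph{completeness}: from $\phi$ one can build a ranking $\sigma_\phi$ that selects a zero-loss action in every round, so that on any realizable stream $\min_\sigma\sum_t\ell_t(\sigma(\calA_t))=0$; and (ii) \emph{soundness}: $\ell_t(a)=\one[\mathrm{dec}(x_t,a)\neq y_t]$ for every $a\in\calA_t$, i.e.\ a round is a ``mistake'' for the learner exactly when its chosen action decodes to the wrong label. (Roughly: the action set includes one action per literal, a literal-action being available exactly when that literal is falsified by $x_t$, which is what lets a ranking ``read off'' whether a given conjunction — hence the DNF — is satisfied; in what follows we only use properties (i)--(ii) as a black box.)

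\textbf{Online-to-batch.} The stream we feed lies in the subclass by construction, and by (i) we have $L^*=0$, so the assumed no-regret guarantee gives $\E\big[\sum_{t=1}^T\one[\mathrm{dec}(x_t,a_t)\neq y_t]\big]=\E\big[\sum_{t=1}^T\ell_t(a_t)\big]\le o(T)$; the learner, viewed through the decoder, makes a vanishing fraction of prediction errors on $T$ i.i.d.\ examples. A standard online-to-batch conversion — run for $T=\poly(n,1/\epsilon,\log(1/\delta))$ rounds, return the hypothesis $x\mapsto\mathrm{dec}(x,a_t)$ for a uniformly random $t$, with the usual Markov-plus-independent-repetitions step to turn the expectation bound into a high-probability one — then yields the required $h$. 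Every piece (computing $\calA_t$ and $\ell_t$, running the assumed-efficient learner, decoding) runs in $\poly(n)$ time, so the resulting DNF learner is efficient; this proves the theorem modulo the construction of the reduction map with the single-zero property.

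\textbf{Forcing one zero-loss action — the main obstacle.} The loss vectors produced by \citet{kanade2014learning} need not have exactly one zero per round (a round where several term-detecting actions are simultaneously ``correct'' has several zeros, and a round with no correct action has none). We therefore augment $[N]$ with a single always-available default action $d$ and re-define losses so that in every round a unique action has loss $0$: if the original round already has at least one zero-loss action, keep a single \emph{canonical} one at loss $0$, raise all other original zeros to $1$, and set $\ell_t(d)=1$; if it has none, set $\ell_t(d)=0$ and leave the rest. The decoder is extended by fixing $\mathrm{dec}(\cdot,d)$, and $\sigma_\phi$ is re-ordered to rank the canonical zero-loss action above every ``relevant'' action and $d$ last, so that soundness is trivially preserved. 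The hard part is checking that completeness survives this surgery: raising actions from loss $0$ to loss $1$ can only hurt rankings, so one must exhibit \emph{a single} ranking $\sigma_\phi$, derived from $\phi$, that still hits the one surviving zero-loss action in every round. This forces the choice of ``canonical'' zero-loss action per round to be simultaneously order-consistent across all rounds of a $\phi$-realizable stream, which constrains how the \citet{kanade2014learning} gadget (whose naturally-correct action may be a different literal/term-action from round to round, depending on which term of $\phi$ fires) must be padded; verifying that a $\poly(n)$-size padding with this order-consistency property exists is the technical heart of the argument, after which the preceding steps and the $\epsilon,\delta$ bookkeeping are routine.
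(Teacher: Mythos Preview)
Your proposal has a genuine gap: you correctly identify the ``technical heart'' of your approach --- showing that a $\poly(n)$-size padding exists whose canonical-zero choice is order-consistent across all rounds of every $\phi$-realizable stream --- but you do not provide it. A proof plan that ends with ``verifying that [the key construction] exists is the technical heart of the argument, after which the preceding steps \ldots\ are routine'' is not a proof; it is precisely the hard step that is missing. And it is not obvious that it can be filled: your deterministic surgery raises some zero-loss actions to loss~$1$, which can destroy \emph{every} good ranking unless the canonical choice happens to be the top-ranked zero for \emph{some fixed} $\sigma_\phi$ in \emph{every} round simultaneously. Exhibiting such a rule that depends only on $(x_t,y_t)$ (so that the reduction stays efficient) while being compatible with a single ranking derived from the unknown $\phi$ is a real combinatorial constraint, not a bookkeeping detail. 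There is also a smaller issue upstream: the \citet{kanade2014learning} chain goes through \emph{agnostic} learning of disjunctions, so the sleeping-expert instances they produce need not have $L^*=0$; your completeness claim~(i) presumes a direct realizable reduction that you only sketch.

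The paper sidesteps all of this with a different idea: rather than modifying the \emph{instances} to have exactly one zero (which forces the order-consistency headache), it modifies the \emph{algorithm}. It observes that the \citet{kanade2014learning} instances already have loss vectors with exactly one $0$ or exactly one $1$. Given an algorithm $\calZ_0$ for the one-zero subclass, it builds an algorithm $\calZ_{01}$ for the one-zero-or-one-one subclass by feeding $\calZ_0$, in each round, a \emph{randomized} loss vector $\vc{\ell}_t$ that always has exactly one zero but whose conditional expectation is an affine function of the true vector $\vc{\ell}_t'$ (the same affine map for the learner and for any ranking). Affine transformations preserve regret up to the multiplicative factor $K-1$, so no-regret for $\calZ_0$ transfers to no-regret for $\calZ_{01}$, and one then invokes \citet{kanade2014learning} as a black box. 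This randomized, algorithm-level reduction avoids having to name a canonical zero at all, and hence avoids the order-consistency obstacle you ran into.
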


We do not have a better lower bound on the approximation ratio for polynomial-time algorithms; these kinds of computation-constrained lower bounds are scarce in the literature.
However, we note that, together with~\citep{awasthi2010improved}, our proof of Theorem~\ref{thm:hardness} implies that achieving an approximation ratio better than $\order(K^{1/3})$ in the general case would improve the state-of-the-art for agnostically learning disjunctions with polynomial-time algorithms.

\begin{proof}
Our hardness proof is heavily based on the hardness result in \cite{kanade2014learning}. They reduce from PAC-learning of DNFs to \emph{agnostic learning of disjunctions}, and from that problem to achieving no-regret performance with high probability against the best ranking in sleeping expert problems. 

The key observation is that the instances of the sleeping expert problem produced by the reduction in \cite{kanade2014learning} are already almost of the restricted form of Theorem~\ref{thm:hardness}: (1) the set of available actions always satisfies $|\calA_t|=K$, (2) the losses $\ell_t(a)\in \{0,1\}$ are always binary, and (3) the loss vector $\vc{\ell}_t$ always has exactly one 0 or exactly one 1. Only the third property is different from our model of exactly one 0. Our proof therefore provides a reduction from their instances to ours.

Let $\calE^0_t = [\sum_{a\in\calA_t} \one[\ell_t(a)=0] = 1]$ be the event that the loss vector in round $t$ has exactly one zero,
and $\calE^1_t = [\sum_{a\in\calA_t} \one[\ell_t(a)=1] = 1]$ the event that the loss vector in round $t$ has exactly one one.

Assume that there is an algorithm $\calZ_0$ which always achieves no regret for instances in which all loss vectors have exactly one zero.
That is, for any binary-loss sequence $\vc{\ell}_t$ that satisfies $\calE^0_t$ for all $t$, the algorithm $\calZ_0$ outputs $a_1, \ldots, a_T$ such that for all $\sigma$, 
\begin{align*}
    \E\left[ \sum_{t=1}^T \ell_t(a_t) - \sum_{t=1}^T \ell_t(\sigma(\calA_t)) \right] = o(T). 
\end{align*} 
We will give a reduction showing how to leverage $\calZ_0$ to obtain an algorithm $\calZ_{01}$ which achieves the same no-regret guarantee for instances in which all loss vectors have exactly one zero or exactly one one.
The algorithm $\calZ_{01}$ works as follows.

\begin{itemize}
     \item Upon receiving the available action set $\calA_t$, $\calZ_{01}$ passes $\calA_t$ to $\calZ_0$, and chooses the action $a_t\in\calA_t$ returned by $\calZ_0$. 
     \item The algorithm observes losses $\ell_t'(a)$ for all $a\in\calA_t$, and can determine which of $\calE^0_t, \calE^1_t$ holds.
     \begin{itemize}
         \item If $\calE^0_t$ holds, then with probability $\frac{1}{K-1}$, $\calZ_{01}$ sets $\vc{\ell}_t$ to be $\vc{\ell}_t'$; with the remaining probability $\frac{K-2}{K-1}$, it uniformly randomly draws $z_t$ from $\calA_t$, sets $\ell_t(z_t)=0$, and $\ell_t(a)=1$ for all $a\in \calA_t\setminus \{z_t\}$.
         \item If $\calE^1_t$ holds, then $\calZ_{01}$ uniformly randomly draws $z_t$ from the $(K-1)$ zero-loss actions. It sets $\ell_t(z_t)=0$ and $\ell_t(a)=1$ for all $a\in \calA_t\setminus \{z_t\}$.   
     \end{itemize}
     \item $\calZ_{01}$ then passes the loss vector $\vc{\ell}_t$ to $\calZ_{0}$.
\end{itemize}

The loss vectors $\vc{\ell}_t$ always have exactly one zero entry. The expected losses are as follows:

\begin{itemize}
    \item Conditioned on $\calE^0_t$, we have $\E[\ell_t(a)] = \frac{1}{K-1}\cdot \ell_t'(a) + \frac{K-2}{K-1}\cdot \frac{K-1}{K} = \frac{K-2}{K} + \frac{\ell_t'(a)}{K-1}$. 
    \item Conditioned on $\calE^1_t$, we have $\E[\ell_t(a)] = \frac{K-2}{K-1}\cdot \one[\ell_t'(a)=0] + 1\cdot \one[\ell_t'(a)=1] = \frac{K-2}{K-1} + \frac{\ell_t'(a)}{K-1}$.
\end{itemize}

Therefore, 
\begin{align*}
&\frac{1}{K-1} \cdot \E\left[ \sum_{t=1}^T \ell_t'(a_t)\right] - \frac{1}{K-1} \cdot \sum_{t=1}^T\ell_t'(\sigma(\calA_t)) \\
&=\E\left[ \sum_{t=1}^T \ell_t(a_t)-\frac{(K-2) \cdot \one[\calE^0_t]}{K}-\frac{(K-2) \cdot \one[\calE^1_t]}{K-1}  \right] \\
&\quad\quad - \E\left[ \sum_{t=1}^T \ell_t(\sigma(\calA_t))-\frac{(K-2) \cdot \one[\calE^0_t]}{K}-\frac{(K-2)\cdot \one[\calE^1_t]}{K-1}  \right] \\
&=\E\left[\sum_{t=1}^T \ell_t(a_t)\right] - \E\left[\sum_{t=1}^T \ell_t(\sigma(\calA_t))\right] = o(T), 
\end{align*}
where the last line is guaranteed by our assumption that $\calZ_0$ is no-regret. Multiplying by $K-1$, we also obtain that
\begin{align*}
    \E\left[ \sum_{t=1}^T \ell_t'(a_t)-\ell_t'(\sigma(\calA_t))\right] = o(T). 
\end{align*}
To finish the reduction from the case of \cite{kanade2014learning} to our case, we need to further argue that the algorithm $\calZ_{01}$ with sublinear expected regret can be transformed into an algorithm that has sublinear regret with high probability.

To see this, one simply runs $T$ copies of $\calZ_{01}$ simultaneously and aggregates them via Hedge to decide the final output. By Hoeffding's inequality, with probability at least $1-\delta$, one of the $T$ copies must have regret smaller than its expectation plus $\order(\sqrt{T\ln(1/\delta)})$ (since the range of regret is $[-T,T]$). Also note that Hedge itself has $\order(\sqrt{T\ln(T/\delta)})$ regret against any one of the copies with probability $1-\delta$. Combining these two statements, we have thus constructed a new algorithm which has sublinear regret with high probability. This completes the proof. 
\end{proof}

\subsection{The \hedgevote Algorithm for Two Zero-Loss Actions}
\label{subsec:hedgevote}
%!TEX root=main.tex

The case of two zero-loss actions is significantly more complicated. Again, we want to design sub-problems with Properties~\ref{prop:mistake} and~\ref{prop:no_mistake}.  
To achieve these properties, it is now not sufficient any more to define sub-problems comparing only two actions, as we did in Section~\ref{subsec:hedgetree}. This is because it is now possible that a ranking makes no mistake ($\ell_t(\sigma(\calA_t))=0$), while making mistakes in some pairwise comparisons ($\ell_t(\sigma(i,j))=1$). For example, consider the case when the first, second, and third actions according to the ranking $\sigma$ have losses $0,1,0$, respectively. Then $\sigma$ does not make a mistake in this round because its top choice receives zero loss. However, in the sub-problem that compares the second and third actions, $\sigma$ does make a mistake because its choice among the two actions incurs a loss of $1$. This would violate Property~\ref{prop:no_mistake}. 

To address the above issue, we design sub-problems as ``comparing two pairs of actions,'' as well as ``choosing among three actions.''
The hedges for triples of actions are standard. For each set $S \subseteq \calA$ with $|S| = 3$, there is a separate \hedge that recommends one of the three actions in $S$. 
This instance is updated only when $S \subseteq \calA_t$ turns out to contain both zero-loss actions, in which case the loss vector is the natural one following $\ell_t$.
See the last part of Algorithm~\ref{alg:for-two-zero}.

The subproblems for pairs of actions are more intricate and non-standard, and we next explain them in detail.
Each such sub-problem compares a pair $X=\{i,j\}$ of actions with another pair $Y=\{k,l\}$, where $i,j,k,l$ are all distinct. The algorithm \hedgevote uses a separate 2-expert \hedge $\calH_{X,Y}$ to learn each such sub-problem $(X,Y)$. 
This instance is only updated when both $X$ and $Y$ are in $\calA_t$.
In this case, only when one of $X$ or $Y$ consists of both of the two zero-loss actions do we assign positive loss to the other pair. 
More precisely, if $\ell_t(i)=\ell_t(j)=0$, then choosing $Y$ in this sub-problem incurs a loss of $1$; similarly, if $\ell_t(k)=\ell_t(l)=0$, then choosing $X$ incurs a loss of $1$. 
In all other cases, we define both actions' losses as $0$. 
We also define the \emph{choice} of a ranking $\sigma$ for this sub-problem as follows: if $\sigma(i,j,k,l)\in X$, then the choice of $\sigma$ is $X$; otherwise, it is $Y$. 
This way, when a ranking $\sigma$ makes no mistake in the original problem ($\ell_t(\sigma(\calA_t))=0$), it also has zero loss in all sub-problems. 
This ensures that Property~\ref{prop:no_mistake} holds.
(The preceding arguments are formalized in Lemma~\ref{algorithm 2 ineq 3}.)  

\begin{figure}[t]
  \begin{algorithm}[H]
    \caption{\hedgevote (Hedges Over Pairs of Pairs)}
    \label{alg:for-two-zero}
    \DontPrintSemicolon
%    \SetAlgoNoEnd
%    \textbf{Initialization:}

    \lForAll{pairs $X, Y$ with $X \cap Y = \emptyset$}{
      %=\{i,j\}, Y=\{k,l\}$, where $i,j,k,l$ are all distinct actions, set
%    \begin{align*}
     set $p_1^{X,Y}(X)=p_1^{X,Y}(Y)=\frac{1}{2}$.
       %      \end{align*}
   }
   
   \lForAll{triples $S=\{i,j,k\}$ of actions}{
    %$i,j,k$ where $i,j,k$ are all distinct, set 
%    \begin{align*}
     set $q_1^S(i) = q_1^S(j) = q_1^S(k) = \frac{1}{3}$. 
     % \end{align*}
   }
   
    \For{$t=1, \ldots, T$}{
      Receive $\calA_t$ and let 
%        \begin{align*}
            $a_t = \vote\left(\calA_t, (p_t^{X,Y})_{X, Y}, (q_t^S)_S  \right)$. 
%        \end{align*}

      Choose $a_t$, suffer loss $\ell_t(a_t)$, and learn $\ell_t(a)$ for all $a\in\calA_t$.

      Let $Z_t=\{a: \ell_t(a)=0\}$ be the pair of actions with zero loss.

      \ForAll{disjoint pairs $X, Y$}{
      Define 
%                \begin{align*}
      $c_t^{X,Y}(X) = \one[Z_t=Y \text{ and } X \subseteq \calA_t]$ and $c_t^{X,Y}(Y) = \one[Z_t=X \text{ and } Y \subseteq \calA_t]$. 
%                \end{align*}
                
      Update
%                \begin{align*}
       $p_{t+1}^{X,Y} = \updatehedge\left( p_t^{X,Y}, c_t^{X,Y} \right)$.
%                \end{align*}
     }
     
     \ForAll{triples $S$}{
            Define 
                $d_t^S(i) = \ell_t(i) \cdot \one[Z_t\subseteq S \subseteq \calA_t], \;\forall i\in S$.

            Update
%            \begin{align*}
                $q_{t+1}^S = \updatehedge\left( q_t^S, d_t^S \right)$. 
%            \end{align*}
           
     }
  }
\end{algorithm}
\begin{algorithm}[H] 
    \caption{\vote}
    \label{alg:selection-rule-2}
    \DontPrintSemicolon
%    \SetAlgoNoEnd
    \KwIn{ $\calA_t$: available action set at time $t$

      $(p_t^{X,Y})_{X, Y}$: hedge probabilities for all disjoint pairs
   
    $(q_t^S)_S$: hedge probabilities for all triples $S$
    }
%    \end{align*}
    
    \textbf{Initialization:}
    
    \lForAll{distinct pairs $X, Y$}{sample $A_t^{X,Y} \sim p_t^{X,Y}$.}
    
    \lForAll{triples $S$}{sample $b_t^S \sim q_t^S$.}
    
    Pair $X \subseteq \calA_t$ is a \emph{good pair} if $A_t^{X,Y}=X$ for all $Y \subseteq \calA_t$ such that $X \cap Y = \emptyset$. 
    
    \lIf{there is no good pair}{arbitrarily choose an $a_t\in\calA_t$.}
%    \ElseIf{there is exactly one good pair $X$}{
%      Arbitrarily choose $a_t$ from $X$.
%      \dkcomment{Technically, this is a special case of there being one common action, so we could remove it.}
%    }
    \lElseIf{there is a common action in all good pairs}{
      let $a_t$ be such a common action.
%      (The existence is guaranteed by Lemma~\ref{lemma: good pair}.) 
    }
    \lElse{there are exactly three good pairs of the form $\{i,j\}, \{j,k\}, \{k,i\}$; let $a_t=b_t^{\{i,j,k\}}$.}
    \Return{$a_t$.}
\end{algorithm}
\end{figure}

To make Property~\ref{prop:mistake} also hold, we design complex rules for aggregating the recommendations of all hedges so that every time the learner suffers loss $1$ in the original problem, it must also suffer positive loss in some sub-problem. For this purpose, we define \emph{good pairs} in the sub-algorithm \vote (Algorithm~\ref{alg:selection-rule-2}). A good pair $X \subseteq \calA_t$ is a pair such that for all disjoint pairs $Y \subseteq \calA_t$, the hedge $\calH_{X,Y}$ chooses $X$ as the winner. It is possible that no pair is good, or that more than one pair is good. For each possibility, we discuss how to choose the final $a_t$ (see Algorithm~\ref{alg:selection-rule-2}). 
The following lemma shows that Algorithm~\ref{alg:selection-rule-2} indeed considers all cases.

\begin{lemma} \label{lemma: good pair}
      For the good pairs defined above, the following hold: 
         1) Any two good pairs must have one common action;
         2) Either all good pairs have one common action, or there are exactly three good pairs, and they are of the form $\{i,j\}, \{j,k\}, \{k,i\}$. 
\end{lemma}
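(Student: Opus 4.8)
The plan is to identify the good pairs with the edges of a graph on the vertex set $\calA_t$ and then prove a small Helly/sunflower-type dichotomy: a family of pairwise-intersecting $2$-element sets is either a \emph{star} (all sets passing through one common element) or a \emph{triangle} $\{i,j\},\{j,k\},\{k,i\}$. I would begin with part (1). The key bookkeeping point is that in Algorithm~\ref{alg:selection-rule-2} the hedge $\calH_{X,Y}$ \emph{is} the instance $\calH_{Y,X}$, so $A_t^{X,Y}$ and $A_t^{Y,X}$ denote the same random variable. Suppose for contradiction that $X$ and $Y$ are good pairs with $X\cap Y=\emptyset$. Since $X$ is good and $Y$ is a disjoint pair contained in $\calA_t$, the definition of good pair forces $A_t^{X,Y}=X$; symmetrically, since $Y$ is good and $X$ is a disjoint pair in $\calA_t$, it forces $A_t^{Y,X}=Y$. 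These are the same variable, yet $X\neq Y$ --- a contradiction. Hence any two good pairs intersect, and being distinct $2$-element sets they share exactly one action.

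Next I would prove part (2), which by part (1) reduces to the purely combinatorial claim: if $\mathcal{G}$ is a family of pairwise-intersecting $2$-element subsets of $\calA_t$, then either all members of $\mathcal{G}$ contain a common element, or $|\mathcal{G}|=3$ and $\mathcal{G}=\{\{i,j\},\{j,k\},\{k,i\}\}$. The cases $|\mathcal{G}|\le 2$ are immediate (for $|\mathcal{G}|=2$, part (1) supplies the common element). So assume $|\mathcal{G}|\ge 3$ and that $\mathcal{G}$ is not a star. Fix $\{i,j\}\in\mathcal{G}$. Non-star-ness means not every member contains $i$, so some member avoids $i$; since it must meet $\{i,j\}$, it equals $\{j,k\}$ for some $k\neq i$. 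Likewise some member avoids $j$, and meeting both $\{i,j\}$ and $\{j,k\}$ it must equal $\{i,k\}$. Thus $\mathcal{G}\supseteq\{\{i,j\},\{j,k\},\{k,i\}\}$. Finally, any further member $e\in\mathcal{G}$ must meet all three of these edges: if $i\in e$ then meeting $\{j,k\}$ forces $e=\{i,j\}$ or $e=\{i,k\}$; if $i\notin e$ then meeting $\{i,j\}$ forces $j\in e$ and meeting $\{i,k\}$ forces $k\in e$, so $e=\{j,k\}$. In every case $e$ is one of the three edges already listed, so $\mathcal{G}$ is exactly that triangle.

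I do not expect a genuine obstacle here: part (1) is essentially a one-line consequence of the instance-sharing identity $A_t^{X,Y}=A_t^{Y,X}$, and part (2) is a finite combinatorial argument whose only mildly delicate step is excluding a fourth edge once a triangle has been forced. The two things to be careful about are the small-cardinality cases ($|\mathcal{G}|\le 2$) and keeping straight that ``good pair'' means precisely ``beats every disjoint pair of $\calA_t$,'' so that the contradiction in part (1) applies verbatim to any two good pairs.
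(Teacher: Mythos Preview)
Your proposal is correct and follows essentially the same approach as the paper: for part~(1) you use exactly the paper's argument that the shared instance $\calH_{X,Y}=\calH_{Y,X}$ cannot simultaneously output $X$ and $Y$, and for part~(2) you spell out in full the pairwise-intersection-to-star-or-triangle dichotomy that the paper simply asserts with ``This also directly implies the second statement.'' Your treatment is thus a strict expansion of the paper's proof rather than a different route.
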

\begin{proof}
    If $X$, $Y$ were disjoint, then for $X$ to be good, $\calH_{X,Y}$ has to choose $X$, but for $Y$ to be good, $\calH_{X,Y}$ has to choose $Y$. So $X$, $Y$ must intersect. 
    This also directly implies the second statement.
\end{proof}

The case when there are exactly three good pairs of the form $\{i,j\}, \{j,k\}, \{k,i\}$ is the only case in which the algorithm needs to also consult the hedges over triples.
The approximate regret guarantee of \hedgevote is given by the following theorem. 

\begin{theorem} \label{theorem: HOPP algorithm}
    \hedgevote ensures: % (Algorithm~\ref{alg:for-two-zero}) ensures:
$
        \E\left[\sum_{t=1}^T \ell_t(a_t)\right] \leq \order(K^2) \E\left[\min_\sigma \sum_{t=1}^T \ell_t(\sigma(\calA_t))\right] + \order(N^4). 
$
\end{theorem}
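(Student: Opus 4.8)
The plan is to follow the same three-step template used for \hedgetree in Theorem~\ref{theorem: HATT}, namely: (i) establish Property~\ref{prop:mistake} --- whenever the learner suffers loss $1$, some sub-algorithm (a hedge over pairs of pairs, or a hedge over triples) also suffers positive loss; (ii) invoke Lemma~\ref{lem: hedge bound approx} separately on each sub-algorithm to bound its expected loss by the best-in-hindsight sub-loss plus an $\order(1)$ additive term; and (iii) establish Property~\ref{prop:no_mistake} --- whenever the best ranking $\sigma$ makes no mistake, it also makes no mistake in any of the sub-problems, so $\sum$ of the best sub-losses is bounded by $\order(K^2)$ times $\ell_t(\sigma(\calA_t))$. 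Summing over the $\order(N^4)$ sub-problems (there are $\order(N^4)$ pairs of disjoint pairs and $\order(N^3)$ triples) then gives the $\order(N^4)$ additive term, and chaining the three bounds gives the theorem, exactly as in the proof of Theorem~\ref{theorem: HATT}.

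For step (i), the key case analysis is driven by Lemma~\ref{lemma: good pair}. If the learner suffers loss $1$, then $a_t \notin Z_t$. First, I would argue that $Z_t$ itself, viewed as a pair $\{p,q\}\subseteq \calA_t$, is a good pair: for any disjoint $Y\subseteq\calA_t$, the loss definition $c_t^{Z_t,Y}(Y)=\one[Z_t=Z_t]=1$ forces $\calH_{Z_t,Y}$ --- wait, more carefully: if $\calH_{Z_t,Y}$ ever chose $Y$ over $Z_t$ we charge it loss $1$, so on the event that the learner errs we only need that \emph{some} good pair or triple errs. The cleaner route: since $a_t\notin Z_t$, by the structure of \vote either (a) there is no good pair, or (b) $a_t$ is a common action of all good pairs with $a_t\notin Z_t$, or (c) $a_t=b_t^{\{i,j,k\}}$ for the triangle $\{i,j\},\{j,k\},\{k,i\}$ of good pairs. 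In case (a), since $Z_t\subseteq\calA_t$, the pair $Z_t$ is \emph{not} good, so there is a disjoint $Y\subseteq\calA_t$ with $A_t^{Z_t,Y}=Y$, and then $c_t^{Z_t,Y}(A_t^{Z_t,Y})=c_t^{Z_t,Y}(Y)=\one[Z_t=Z_t,\,Y\subseteq\calA_t]=1$ --- a pair-sub-algorithm errs. In case (b), the common action $a_t$ is in every good pair; if the good pair $X^\ast$ containing the partner of $Z_t$'s... I would instead observe $Z_t$ must contain $a_t$ or be disjoint from some good pair --- if $Z_t$ is not good there is a disjoint $Y$ making $\calH_{Z_t,Y}$ err as before; if $Z_t$ \emph{is} good then $a_t$ (common to all good pairs) lies in $Z_t$, contradicting $\ell_t(a_t)=1$. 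In case (c), both zero-loss actions lie in $\{i,j,k\}$ iff $Z_t\subseteq\{i,j,k\}$; if so, $d_t^{\{i,j,k\}}(a_t)=\ell_t(a_t)\cdot\one[Z_t\subseteq\{i,j,k\}\subseteq\calA_t]=1$ and the triple-hedge errs; if $Z_t\not\subseteq\{i,j,k\}$ then $Z_t$ shares at most one action with $\{i,j,k\}$, so one of $\{i,j\},\{j,k\},\{k,i\}$ is disjoint from $Z_t$ --- but that pair is good, and $\calH_{Z_t,\cdot}$... again reduce to: $Z_t$ is then not good (it intersects at most one of the three, so some good pair is disjoint from $Z_t$, hence $Z_t\ne$ that good pair's winner is forced the wrong way), giving a pair-sub-algorithm error. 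This exhausts the cases.

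For step (iii), suppose $\ell_t(\sigma(\calA_t))=0$, i.e.\ $\sigma(\calA_t)\in Z_t$; call it $z^\ast$. For a pair sub-problem $(X,Y)$ with both $X,Y\subseteq\calA_t$: $\sigma$'s choice is $X$ if $\sigma(i,j,k,l)\in X$ else $Y$. If $Z_t=Y$ then both of $Y$'s elements are zero-loss and outrank everything else available, so $\sigma(i,j,k,l)\in Y$, making $\sigma$'s choice $Y$, which incurs loss $c_t^{X,Y}(Y)=\one[Z_t=X,\dots]=0$ since $Z_t=Y\ne X$. Symmetrically if $Z_t=X$. If $Z_t\notin\{X,Y\}$ both sub-losses are identically zero. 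For a triple $S$: the sub-loss $d_t^S$ is nonzero only when $Z_t\subseteq S\subseteq\calA_t$, in which case $\sigma(S)=z^\ast\in Z_t$ (since $z^\ast$ outranks all of $\calA_t\supseteq S$) and $d_t^S(z^\ast)=\ell_t(z^\ast)=0$. Hence $\sigma$ has zero loss in every sub-problem when it errs nowhere. When $\ell_t(\sigma(\calA_t))=1$, I bound the total sub-loss of $\sigma$: each pair sub-problem contributes at most $1$ and is only "active" when its losing pair equals $Z_t$ and both pairs sit in $\calA_t$; $Z_t$ is fixed in round $t$, so the active pair-problems are those $(X,Z_t)$ with $X\subseteq\calA_t\setminus Z_t$ disjoint from $Z_t$, of which there are $\binom{|\calA_t|-2}{2}=\order(K^2)$; similarly at most $|\calA_t|-2=\order(K)$ triples $S$ with $Z_t\subseteq S\subseteq\calA_t$ are active. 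So $\sum_{X,Y}c_t^{X,Y}(\sigma(X,Y))+\sum_S d_t^S(\sigma(S))\le \order(K^2)\cdot\ell_t(\sigma(\calA_t))$, the analogue of Lemma~\ref{lemma: alorithm 1 ineq 3}.

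The main obstacle I anticipate is step (i), case (c) and the boundary between cases (b) and (c): pinning down precisely why, when the learner's output comes from a triple-hedge and that triple does \emph{not} contain $Z_t$, one of the three triangle pairs must be disjoint from $Z_t$ and therefore its hedge $\calH_{Z_t,\cdot}$ is charged --- this requires carefully tracking the distinction between "$X$ is a good pair" (a statement about the sampled winners $A_t^{X,Y}$) and "$\calH_{Z_t,Y}$ suffers loss" (also about $A_t^{Z_t,Y}$), and noting that $c_t^{X,Y}$ is independent of $A_t^{X,Y}$ so Lemma~\ref{lem: hedge bound approx} still applies exactly as in Lemma~\ref{lemma: alorithm 1 ineq 2}. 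Once the case analysis for Property~\ref{prop:mistake} is complete, assembling the final bound is the routine three-inequality chain from the proof of Theorem~\ref{theorem: HATT}, with $\binom{N}{2}$ replaced by the number of sub-problems, $\order(N^4)$.
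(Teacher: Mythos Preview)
Your proposal is correct and follows the same three-lemma structure as the paper (its Lemmas~\ref{algorithm 2 ineq 1}--\ref{algorithm 2 ineq 3}); your case~(c) argument in step~(i) differs only cosmetically from the paper's (you exhibit a specific good pair $W$ disjoint from $Z_t$ and use $A_t^{W,Z_t}=W$, whereas the paper simply observes that $Z_t$ is not among the three good pairs and hence is not good), and both versions work. One small slip to correct in step~(iii): when $Z_t=Y$ it is \emph{not} true that both elements of $Y$ outrank everything else in $\calA_t$---only $z^\ast=\sigma(\calA_t)$ does---but your conclusion $\sigma(i,j,k,l)\in Y$ still follows for the simpler reason that $z^\ast\in Y$ and $\{i,j,k,l\}\subseteq\calA_t$.
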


Note that the approximation ratio $\order(K^2)$ is significantly worse than the case with one zero-loss action, but is still only a function of $K$ (and not $N$).
The additive regret term is also worse, but still independent of $T$.
To prove Theorem~\ref{theorem: HOPP algorithm}, we make use the following three lemmas (the notation in the lemmas is defined in Algorithms~\ref{alg:for-two-zero} and \ref{alg:selection-rule-2}).
Again, Lemmas~\ref{algorithm 2 ineq 1} and~\ref{algorithm 2 ineq 3} assert that \hedgevote satisfies Properties~\ref{prop:mistake} and~\ref{prop:no_mistake}.

\begin{lemma}\label{algorithm 2 ineq 1}
    \hedgevote guarantees that
    \begin{align*}
        \ell_t(a_t) \leq \sum_{X, Y \text{ disjoint}} c_t^{X,Y}(A_t^{X,Y}) + \sum_{S: |S| = 3} d_t^S(b_t^S). 
    \end{align*}
\end{lemma}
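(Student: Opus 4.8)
The plan is a short case analysis driven by whether the zero-loss pair $Z_t=\{a:\ell_t(a)=0\}$ is itself a \emph{good pair} in the sense of Algorithm~\ref{alg:selection-rule-2}. The inequality is trivial when $\ell_t(a_t)=0$, since every $c_t^{X,Y}(\cdot)$ and every $d_t^S(\cdot)$ is non-negative; so I would immediately reduce to the case $\ell_t(a_t)=1$, i.e.\ $a_t\notin Z_t$. Note also that $Z_t\subseteq\calA_t$ and $|Z_t|=2$ by the two-zero-loss assumption, so $Z_t$ is a legitimate pair to reason about.

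First I would handle the case where $Z_t$ is \emph{not} a good pair. By the definition of a good pair, there is then some pair $Y\subseteq\calA_t$ with $Y\cap Z_t=\emptyset$ whose hedge $\calH_{Z_t,Y}$ does not select $Z_t$; since $A_t^{Z_t,Y}\in\{Z_t,Y\}$, this means $A_t^{Z_t,Y}=Y$. Reading off the definition of $c_t^{X,Y}$ with $X=Z_t$ gives $c_t^{Z_t,Y}(A_t^{Z_t,Y})=c_t^{Z_t,Y}(Y)=\one[Z_t=Z_t\ \text{and}\ Y\subseteq\calA_t]=1$, the last equality because $Y$ was chosen with $Y\subseteq\calA_t$. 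Hence the right-hand side is already at least $1=\ell_t(a_t)$.

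Otherwise $Z_t$ is a good pair, so there is at least one good pair and Algorithm~\ref{alg:selection-rule-2} falls into one of its last two branches. In the branch where all good pairs share a common action, $a_t$ is set to that action; since $Z_t$ is a good pair with $|Z_t|=2$, the common action lies in $Z_t$, forcing $\ell_t(a_t)=0$ and contradicting $\ell_t(a_t)=1$. So we must be in the final branch: by Lemma~\ref{lemma: good pair} there are exactly three good pairs $\{i,j\},\{j,k\},\{k,i\}$ and $a_t=b_t^S$ with $S=\{i,j,k\}$. Each good pair is contained in $\calA_t$, so $S\subseteq\calA_t$; and $Z_t$ is one of those three good pairs, so $Z_t\subseteq S$. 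Therefore $d_t^S(b_t^S)=\ell_t(b_t^S)\cdot\one[Z_t\subseteq S\subseteq\calA_t]=\ell_t(a_t)=1$, and once more the right-hand side is at least $\ell_t(a_t)$.

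The only real content is the observation that it suffices to inspect the single pair $Z_t$, and—when $Z_t$ is good—the unique triple $S$ containing it: whenever the learner errs, the witnessing sub-problem is either $(Z_t,Y)$ for some $Y$ that beat $Z_t$, or the triangle triple $S\supseteq Z_t$ whose hedge produced $a_t$. I do not anticipate a genuine obstacle; the points to get right are evaluating the indicators in the definitions of $c_t^{X,Y}$ and $d_t^S$ correctly and invoking Lemma~\ref{lemma: good pair} to rule out a fourth good pair. I would also sanity-check the degenerate situations (e.g.\ $|\calA_t|=2$, or $Z_t$ being vacuously good), but these all collapse into the already-handled case $\ell_t(a_t)=0$.
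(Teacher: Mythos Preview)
Your proof is correct and follows essentially the same approach as the paper's: both hinge on whether $Z_t$ is a good pair, producing a witnessing $c_t^{Z_t,Y}$ term when it is not and a witnessing $d_t^S$ term (via the triangle branch) when it is. The only difference is organizational---the paper walks through the algorithm's branches to argue $Z_t$ is not good (outside the triangle case), whereas you case directly on the goodness of $Z_t$---but the underlying argument is identical.
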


\begin{proof}
    If $\ell_t(a_t)=0$, then the inequality clearly holds. Therefore, we only need to consider the case $\ell_t(a_t)=1$. 

First, for all the cases except when there are exactly three good pairs of the form $\{i,j\}, \{j,k\}, \{k,i\}$, we prove that the pair $Z_t$ of zero-loss actions cannot be good:
    \begin{itemize}
        \item If there is no good pair, then clearly $Z_t$ cannot be good.
        \item If there is exactly one good pair, then $Z_t$ would be that pair. Therefore, the algorithm would have selected an element of $Z_t$, implying that $\ell_t(a_t) = 0$, a contradiction.
        \item If all good pairs have one common action, and $Z_t$ is one of them, then the algorithm selects an element in the intersection of the good pairs. In particular, the element $a_t \in Z_t$, so $\ell_t(a_t) = 0$, a contradiction.
    \end{itemize}

    Since $Z_t$ is not a good pair, there exists a pair $X \subseteq \calA_t$ such that $A_t^{X,Z_t}=X$, and thus 
    \begin{align*}
        c_t^{X,Z_t}(A_t^{X,Z_t}) = c_t^{X,Z_t}(X) = 1,
    \end{align*}
proving the lemma statement.
%    The same argument applies when there are three good pairs and none of them are $Z_t$. 
The only remaining case is when there are exactly three good pairs $\{i,j\}, \{j,k\}, \{k,i\}$.
If $Z_t$ is not one of these pairs, then the exact same argument holds; otherwise, since $\ell_t(a_t)=1$, we must have $Z_t=\{i,j\}$ and $a_t=k$
% and the three good pairs are $\{i,j\}, \{j,k\}, \{k,i\}$ (by Lemma~\ref{lemma: good pair}). 
and therefore, 
    \begin{align*}
        d_t^{i,j,k}(b_t^{i,j,k}) = d_t^{i,j,k}(k) = \ell_t(k)=1,
    \end{align*}
    finishing the proof.
\end{proof}
  
\begin{lemma}\label{algorithm 2 ineq 2}
    \hedgevote ensures that for all disjoint pairs $X, Y$,
    \begin{align*}
        \E\left[\sum_{t=1}^T c_t^{X,Y}(A_t^{X,Y})\right]
        &\leq \frac{\eta}{1-\e^{-\eta}}\E\left[\min_\sigma \sum_{t=1}^T c_t^{X,Y}(\sigma(X,Y))\right] + \frac{\ln 2}{1-\e^{-\eta}},    
    \end{align*}
    where $\sigma(X,Y)= X$ if $\sigma(X \cup Y) \in X$ and $\sigma(X,Y)= Y$ otherwise. 
    Also, for all triples $S$,
    \begin{align*}
        \E\left[\sum_{t=1}^T d_t^S(b_t^S)\right]
        & \leq \frac{\eta}{1-\e^{-\eta}} \E\left[\min_\sigma \sum_{t=1}^T d_t^S(\sigma(S))\right]+\frac{\ln 3}{1-\e^{-\eta}}. 
    \end{align*}
\end{lemma}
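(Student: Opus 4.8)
The plan is to read both inequalities as nothing more than the single-instance \hedge guarantee of Lemma~\ref{lem: hedge bound approx}, applied separately to each sub-algorithm $\calH_{X,Y}$ and each triple-hedge $\calH_S$. First I would fix a disjoint pair $(X,Y)$ and note that, by construction in Algorithm~\ref{alg:for-two-zero}, the sequence $(p_t^{X,Y})_{t}$, the samples $(A_t^{X,Y})_t$ drawn in \vote, and the loss vectors $(c_t^{X,Y})_t$ form exactly one execution of Algorithm~\ref{alg:hedge} with action set $\calS=\{X,Y\}$ (so $|\calS|=2$) and loss range $R=1$, since every entry of $c_t^{X,Y}$ lies in $\{0,1\}$. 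The update $p_{t+1}^{X,Y}=\updatehedge(p_t^{X,Y},c_t^{X,Y})$ is run in every round; in rounds where neither $X$ nor $Y$ equals $Z_t$, or where one of them is not contained in $\calA_t$, the vector $c_t^{X,Y}$ is identically zero and the update is a no-op, but this is harmless — Lemma~\ref{lem: hedge bound approx} applies to the full length-$T$ loss sequence regardless. Plugging $R=1$ and $|\calS|=2$ into Lemma~\ref{lem: hedge bound approx} gives $\E[\sum_t c_t^{X,Y}(A_t^{X,Y})]\le \frac{\eta}{1-\e^{-\eta}}\,\E[\min_{W\in\{X,Y\}}\sum_t c_t^{X,Y}(W)]+\frac{\ln 2}{1-\e^{-\eta}}$. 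To match the stated bound I would then observe that, by the definition of $\sigma(X,Y)$ in the statement, $\sigma(X,Y)\in\{X,Y\}$ for every ranking $\sigma$, so for every realization of the randomness $\min_\sigma\sum_t c_t^{X,Y}(\sigma(X,Y))\ge \min_{W\in\{X,Y\}}\sum_t c_t^{X,Y}(W)$; replacing the comparator accordingly only weakens the right-hand side, which yields the first inequality.

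The triple case is completely parallel: for a fixed triple $S$, the sequence $(q_t^S)_t$, the samples $(b_t^S)_t$, and the loss vectors $(d_t^S)_t$ constitute one execution of Algorithm~\ref{alg:hedge} with $\calS=S$, $|\calS|=3$, and $R=1$ (each entry of $d_t^S$ is in $\{0,1\}$). Lemma~\ref{lem: hedge bound approx} then gives the bound with comparator $\min_{a^*\in S}\sum_t d_t^S(a^*)$ and additive term $\frac{\ln 3}{1-\e^{-\eta}}$, and since $\sigma(S)\in S$ for every ranking we again pass to $\min_\sigma\sum_t d_t^S(\sigma(S))$ at the cost of only weakening the inequality. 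That gives the second inequality.

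The only step carrying content beyond this bookkeeping — and the one I would state precisely, exactly as was done for Lemma~\ref{lemma: alorithm 1 ineq 2} — is verifying the hypothesis underlying the use of Lemma~\ref{lem: hedge bound approx}, namely that $\E[\,c_t^{X,Y}(A_t^{X,Y}) \mid \mathcal{G}_t\,]=\sum_{W}p_t^{X,Y}(W)\,c_t^{X,Y}(W)$, where $\mathcal{G}_t$ is the history through the adversary's round-$t$ move; i.e. that the loss vector handed to $\calH_{X,Y}$ in round $t$ does not depend on the sample $A_t^{X,Y}$ of that same instance. This holds because, by its definition in Algorithm~\ref{alg:for-two-zero}, $c_t^{X,Y}$ is a deterministic function of $\calA_t$ and $Z_t=\{a:\ell_t(a)=0\}$ only — both of which are fixed by the adversary before any sampling inside \vote — so $c_t^{X,Y}$ is independent of all the round-$t$ samples $A_t^{\cdot,\cdot}$ and $b_t^{\cdot}$ (an even cleaner situation than in \hedgetree, where the analogous loss could depend on the \emph{other} instances' samples). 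The same observation applies verbatim to $d_t^S$, which depends only on $\calA_t$, $Z_t$, and $\ell_t$. I therefore do not expect a genuine obstacle here; the main things to get right are invoking Lemma~\ref{lem: hedge bound approx} with the correct $R$ and $|\calS|$, and not conflating ``the instance is updated in every round'' with ``its loss is nonzero in every round.''
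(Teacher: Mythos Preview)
Your proposal is correct and follows exactly the paper's approach: observe that $c_t^{X,Y}$ (resp.\ $d_t^S$) is determined by $\calA_t$ and $Z_t$ alone and hence is independent of the sample $A_t^{X,Y}$ (resp.\ $b_t^S$), then apply Lemma~\ref{lem: hedge bound approx} with $R=1$ and $|\calS|=2$ (resp.\ $|\calS|=3$). Your extra remark that passing from $\min_{W\in\{X,Y\}}$ to $\min_\sigma$ only weakens the bound is accurate and slightly more explicit than the paper, which silently identifies the two comparators.
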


\begin{proof}
Note that the value of $c_t^{X,Y}$ is independent of $A_t^{X,Y}$, and the value of $d_t^S$ is independent of $b_t^S$.
Therefore, the first bound is obtained by applying Lemma~\ref{lem: hedge bound approx} with $R=1$ and $\calS=\{X,Y\}$, and the second bound by applying the same lemma with $R=1$ and $\calS=S$.
\end{proof}

\begin{lemma}\label{algorithm 2 ineq 3}
    \hedgevote ensures that for all rankings $\sigma$,  
    \begin{align*}
        \sum_{X, Y \text{ disjoint}} c_t^{X,Y}(\sigma(X,Y)) & \leq {K-2 \choose 2}  \cdot \ell_t(\sigma(\calA_t)) \;\text{and}\;
         \sum_{S: |S| = 3} d_t^S(\sigma(S)) & \leq (K-2) \cdot \ell_t(\sigma(\calA_t)). 
    \end{align*}
\end{lemma}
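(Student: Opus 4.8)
The plan is a case analysis on the value of $\ell_t(\sigma(\calA_t))$, mirroring the proof of Lemma~\ref{lemma: alorithm 1 ineq 3} but accounting for the fact that now $Z_t$ (the set of zero‑loss actions) has size $2$ and is a $2$‑element subset of $\calA_t$. Throughout I will repeatedly use the elementary fact that if the $\sigma$‑maximal element of a set lies in a subset, then it is also the $\sigma$‑maximal element of that subset; this is the mechanism behind Property~\ref{prop:no_mistake}.

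First I would handle the case $\ell_t(\sigma(\calA_t)) = 0$, where I claim every summand on the left of both inequalities vanishes, so the bounds hold with both sides equal to $0$. Here $\sigma(\calA_t) \in Z_t$. For the pair‑of‑pairs sum: if some term $c_t^{X,Y}(\sigma(X,Y))$ were nonzero, then by the definition of $c_t^{X,Y}$ one of $X,Y$ equals $Z_t$, the other is contained in $\calA_t$, and $\sigma(X,Y)$ is that other pair; but then $X\cup Y\subseteq\calA_t$ and $\sigma(\calA_t)\in Z_t\subseteq X\cup Y$, so $\sigma(X\cup Y)=\sigma(\calA_t)\in Z_t$, which by the definition of $\sigma(X,Y)$ forces $\sigma(X,Y)$ to equal the pair $Z_t$ — a contradiction. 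For the triples sum: if $d_t^S(\sigma(S))$ were nonzero, then $Z_t\subseteq S\subseteq\calA_t$ and $\ell_t(\sigma(S))=1$; but $\sigma(\calA_t)\in Z_t\subseteq S\subseteq\calA_t$, so $\sigma(S)=\sigma(\calA_t)$ and $\ell_t(\sigma(S))=0$, again a contradiction.

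Next I would handle the case $\ell_t(\sigma(\calA_t))=1$, where it suffices to bound the number of nonzero summands, each of which is at most $1$. For the pair‑of‑pairs sum, $c_t^{X,Y}(\sigma(X,Y))=1$ forces (as above) that one of $X,Y$ equals $Z_t$ and the other is a $2$‑element subset of $\calA_t$ disjoint from $Z_t$, i.e.\ a $2$‑element subset of $\calA_t\setminus Z_t$; since $|\calA_t\setminus Z_t|\le K-2$ there are at most ${K-2\choose 2}$ such unordered pairs $\{X,Y\}$, which gives the first bound. For the triples sum, $d_t^S(\sigma(S))>0$ requires $Z_t\subseteq S\subseteq\calA_t$, so $S=Z_t\cup\{a\}$ for some $a\in\calA_t\setminus Z_t$; there are at most $K-2$ such triples, which gives the second bound. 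Combining the two cases yields the lemma. I do not anticipate a serious obstacle: the argument is essentially counting plus a short case split, and the only points needing care are keeping the pair‑of‑pairs index set indexed by \emph{unordered} pairs $\{X,Y\}$ so as not to double count, and the small observation about $\sigma$‑maximal elements restricting to subsets, which is what lets us conclude $\sigma(X\cup Y)=\sigma(\calA_t)$ and $\sigma(S)=\sigma(\calA_t)$ in the zero‑loss case.
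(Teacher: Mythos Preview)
Your proposal is correct and follows essentially the same approach as the paper: a case split on $\ell_t(\sigma(\calA_t))$, showing in the zero case that every summand vanishes (using that $\sigma(\calA_t)\in Z_t$ restricts to the $\sigma$-top of any subset containing it), and in the one case counting the at most $\binom{K-2}{2}$ disjoint-pair terms and $K-2$ triple terms that can be nonzero. Your write-up is in fact slightly more explicit than the paper's (you spell out the contradiction in the zero case and flag the unordered-pair indexing), but the argument is the same.
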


\begin{proof}
    If $\ell_t(\sigma(\calA_t))=0$, then $\sigma(X,Z_t)=Z_t$ for every $X \subseteq \calA_t$. Also, $\sigma(Z_t\cup\{i\})\in Z_t$ for every $i \in \calA_t$. Therefore, by the construction of $c_t^{X,Y}$ and $d_t^S$, we have
    \begin{align*}
        \sum_{X,Y \text{ disjoint}} c_t^{X,Y}(\sigma(X,Y))  &=\sum_{X\subseteq \calA_t \setminus Z_t} c_t^{X,Z_t}(\sigma(X,Z_t))=\sum_{X\subseteq \calA_t \setminus Z_t} c_t^{X,Z_t}(Z_t)=0.\\
        \sum_{S: |S| = 3} d_t^S(\sigma(S))
        &=\sum_{i \in \calA_t \setminus Z_t} d_t^{Z_t\cup\{i\}}(\sigma(Z_t\cup\{i\}))=0. 
    \end{align*}
    When $\ell_t(\sigma(\calA_t))=1$, we have $\sum_{X, Y \text{ disjoint}} c_t^{X,Y}(\sigma(X,Y))=\sum_{X\subseteq \calA_t \setminus Z_t} 1 \leq {K-2 \choose 2}= {K-2 \choose 2} \cdot \ell_t(\sigma(\calA_t))$, proving the first inequality. For the second inequality, we use $\sum_S d_t^S(\sigma(S)) \leq \sum_{i \in \calA_t \setminus Z_t} 1 \leq K-2 = (K-2) \cdot \ell_t(\sigma(\calA_t))$. 
\end{proof}

We are now ready to prove the theorem.

\begin{proof}[of Theorem~\ref{theorem: HOPP algorithm}]
    We apply Lemmas~\ref{algorithm 2 ineq 1}, \ref{algorithm 2 ineq 2}, \ref{algorithm 2 ineq 3} successively: 
    \begin{align*}
         \E\left[\sum_{t=1}^T \ell_t(a_t) \right]
         &\leq \E\scalebox{0.85}{$\displaystyle\left[\sum_{t=1}^T \left(\sum_{X,Y} c_t^{X,Y}(A_t^{X,Y}) + \sum_{S: |S|=3} d_t^S(b_t^S)\right)\right]$} \\
         &\leq \frac{\eta}{1-\e^{-\eta}}\E\scalebox{0.85}{$\displaystyle\left[\min_\sigma \sum_{t=1}^T \left(\sum_{X,Y} c_t^{X,Y}(\sigma(X,Y)) + \sum_{S: |S|=3} d_t^S(\sigma(S))\right)\right]$}
          + \order\left(\frac{N^4}{1-\e^{-\eta}}\right) \\
         &\leq \frac{\eta}{1-\e^{-\eta}} \cdot \E\scalebox{0.85}{$\displaystyle\left[\order(K^2)\min_\sigma\sum_{t=1}^T \ell_t(\sigma(\calA_t)) \right]$} + \order\left(\frac{N^4}{1-\e^{-\eta}}\right).
     \end{align*}   
     Picking $\eta=1$ completes the proof. 
\end{proof}
   
%
%\begin{lemma} \label{algorithm 2 ineq 2}
%    \hedgevote ensures that for all disjoint pairs $X, Y$,
%    \begin{align*}
%        \E\left[\sum_{t=1}^T c_t^{X,Y}(A_t^{X,Y})\right]
%        \leq \frac{\eta}{1-\e^{-\eta}}\E\left[\min_\sigma \sum_{t=1}^T c_t^{X,Y}(\sigma(X,Y))\right] + \frac{\ln 2}{1-\e^{-\eta}},    
%    \end{align*}
%    where $\sigma(X,Y)\triangleq X$ if $\sigma(X \cup Y) \in X$ and $\sigma(X,Y)\triangleq Y$ otherwise. 
%    Also, for all triples $S$,
%    \begin{align*}
%        \E\left[\sum_{t=1}^T d_t^S(b_t^S)\right] 
%         \leq \frac{\eta}{1-\e^{-\eta}} \E\left[\min_\sigma \sum_{t=1}^T d_t^S(\sigma(S))\right]+\frac{\ln 3}{1-\e^{-\eta}}. 
%    \end{align*}
%\end{lemma}
%
%
%
%\begin{lemma} \label{algorithm 2 ineq 3}
%    \hedgevote ensures that for all rankings $\sigma$,  
%    \begin{align*}
%        \sum_{X, Y \text{ disjoint}} c_t^{X,Y}(\sigma(X,Y))  \leq {K-2 \choose 2}  \cdot \ell_t(\sigma(\calA_t)) ~~~\text{and}~~
%         \sum_{S: |S| = 3} d_t^S(\sigma(S))  \leq (K-2) \cdot \ell_t(\sigma(\calA_t)). 
%    \end{align*}
%\end{lemma}

\section{The Bandit Setting}
\label{sec:bandit}
%!TEX root=main.tex

For the bandit setting, we consider two regimes. The first is the setting of Section~\ref{subsec:hedgetree}, i.e., in each round, exactly one action has zero loss. We show how to adapt Algorithm~\ref{alg:hedge-tournament} to the bandit setting while maintaining the same $\order(\log K)$ approximation ratio, albeit at the cost of larger additive regret.
Then, we consider the bandit model without any assumptions on the sizes of available action sets or numbers of zero-loss actions. In this case, we give an algorithm with approximation ratio $\order(N)$.

%\subsection{Bandit-\hedgetree} \label{sec: bandit tree}
\paragraph{Bandit-\hedgetree.}
We begin by considering the setting of Section~\ref{subsec:hedgetree}, i.e., in each round $t$, exactly one action $z_t$ has loss 0, while all others have loss 1. We show how to combine the ideas of Algorithm~\ref{alg:hedge-tournament} with the ``inverse-propensity weighting'' technique to turn the algorithm into a bandit algorithm.

Since the algorithm does not learn the loss of all actions, we cannot define $c_t^{i,j}$ as in Algorithm~\ref{alg:hedge-tournament}. However, notice that when the learner happens to draw the zero-loss action at time $t$ (i.e., $\ell_t(a_t)=0$), she can infer all other actions' losses. Based on this observation, we can define an unbiased estimator for the $c_t^{i,j}$ in Algorithm~\ref{alg:hedge-tournament}. First, we define an exploration indicator $\rho_t$, which is drawn independently in each round $t$, and is 1 with probability $\mu$ and 0 otherwise. If $\rho_t=1$, then $a_t$ is drawn uniformly randomly from $\calA_t$; otherwise, $a_t$ is set to the output of Algorithm~\ref{alg:selection-rule-1} (as in the full-information setting). Then, we define 
$
    c_t^{i,j}(i) = \ell_t(i) \cdot \frac{|\calA_t| \cdot \one[\rho_t=1] \cdot \one[\ell_t(a_t)=0]}{\mu}  %\label{eqn: inverse propensity}
$
if $a_t \in\{i,j\}\in U_t$; otherwise, $c_t^{i,j}(i)=0$.
This number is always accessible because when $\ell_t(a_t)=0$, the learner can infer the losses of all actions. Note that the $|\calA_t|\cdot \one[\rho_t=1] \cdot \one[\ell_t(a_t)=0]/\mu$ factor has an expectation of $1$ because $\rho_t=1$ happens with probability $\mu$, and when $\rho_t=1$, $a_t=z_t$ with probability $1/|\calA_t|$. So we see that the $c_t^{i,j}$ in Algorithm~\ref{alg:exp3-tournament} are exactly unbiased estimators for the $c_t^{i,j}$ defined in Algorithm~\ref{alg:hedge-tournament}. 

Note that the scaling by $\mu$ in the definition of $c_t^{i,j}$ results in values that are not in $[0,1]$; this is why we needed the more general bound of Lemma~\ref{lem: hedge bound approx} for the analysis of Hedge.

\begin{algorithm}[t!]
     \caption{Bandit-\hedgetree}
     \label{alg:exp3-tournament}
     \DontPrintSemicolon
%    \SetAlgoNoEnd
% \textbf{Initialization:}\\
     \lForAll{$i<j$}{set $p_{1}^{i,j}(i) =  p_{1}^{i,j}(j) = \frac{1}{2}$.}

     \For{$t=1, \ldots, T$}{
       Receive $\calA_t$.

       Let $(\widehat{a}_t, U_t) = \tournament (\calA_t,
       (p_{t}^{i,j})_{i,j})$.

       Draw $\rho_t \sim \text{Bernoulli}(\mu)$.
         
         \lIf{$\rho_t=1$}{let $a_t \sim \text{Uniform}(\calA_t)$ \textbf{else} let $a_t=\widehat{a}_t$.}
         %\lElse{let $a_t=\widehat{a}_t$.}

         Choose $a_t$ and suffer loss $\ell_{t}(a_t)$.

         \If{$\rho_t=1$ and $\ell_t(a_t)=0$ }{ 
           \texttt{/\!\!/{\small\ In this case, $z_t=a_t$.}}
           
            \ForAll{$i$ with $\{i, z_t\} \in U_t$}{   
              $c_t^{i,z_t}(i) = \frac{|\calA_t|\cdot \one[\rho_t=1] \cdot \one[\ell_t(a_t)=0]}{\mu},~ c_t^{i,z_t}(z_t)=0$,~ $p_{t+1}^{i,z_t} = \updatehedge\left(p_t^{i,z_t}, c_t^{i,z_t} \right)$. 
            }
              
            \lForAll{other $i<j$}{let $c_t^{i,j}(\cdot)=0$ and $p_{t+1}^{i,j}=p_t^{i,j}$.
            }
         }
         \lElse{\lForAll{$i < j$}{let $c_t^{i,j}(\cdot)=0$ and $p_{t+1}^{i,j}=p_t^{i,j}$.}}
         }
\end{algorithm}

Also note that the way we construct the estimators is different from the standard way for the multi-armed bandit problem~\citep{auer2002nonstochastic}, i.e., the special case when $\calA_t$ is fixed for all $t$.
The standard way would require computing the exact probability of choosing each action, which is complicated for our algorithm.
Moreover, for our problem, to design algorithms with Properties~\ref{prop:mistake} and~\ref{prop:no_mistake}, it is also important to assign non-zero losses to Hedges only when we know exactly what the loss vector is.
This is also the reason that we are unable to generalize \hedgevote to the bandit setting to deal with two zero-loss actions --- with bandit feedback the learner can never be sure what the entire loss vector is.

For Bandit-\hedgetree, we prove the following theorem.
Note that the bound enjoys the same $\order(\log(K))$ approximation ratio as in the full-information setting, but suffers $\order(\sqrt{T})$ additive regret.

\begin{theorem}
\label{theorem: bandit hat}
    {\rm Bandit}-\hedgetree (Algorithm~\ref{alg:exp3-tournament}) guarantees that for any ranking $\sigma$,
    \begin{align*}
       \E\left[\sum_{t=1}^T \ell_t(a_t) \right] 
        \leq \frac{(1+\lceil \log_2(K)\rceil) \cdot \frac{K\eta}{\mu}}{1-\e^{-\frac{K\eta}{\mu}}} \cdot \E\left[\sum_{t=1}^T \ell_t(\sigma(\calA_t))\right] 
         + \order\left(\frac{K N^2 }{\mu\left(1-\e^{-\frac{K\eta}{\mu}}\right)} + \mu T\right).
    \end{align*}
    Letting $\mu=\min\left\{N\sqrt{\frac{K}{T}}, 1\right\}$, $\eta=\frac{\mu}{K}$, the above is bounded by 
$
        \order(\log(K)) \cdot \E\left[\sum_{t=1}^T \ell_t(\sigma(\calA_t))\right] + \order\left(N\sqrt{KT} + KN^2\right). 
$
\end{theorem}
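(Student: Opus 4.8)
The plan is to reproduce the three-step argument behind Theorem~\ref{theorem: HATT}, but now routed through the fact that the quantities $c_t^{i,j}$ of Algorithm~\ref{alg:exp3-tournament} are unbiased estimates of the full-information losses (call them $\bar c_t^{i,j}$) used by Algorithm~\ref{alg:hedge-tournament}, while the loss range fed to each \hedge instance has grown from $1$ to $K/\mu$. First I would separate out the cost of exploration: because $a_t=\widehat a_t$ (the tournament winner) with probability $1-\mu$ and $a_t$ is uniform on $\calA_t$ otherwise, $\E[\ell_t(a_t)]\le \E[\ell_t(\widehat a_t)]+\mu$, hence $\E[\sum_{t}\ell_t(a_t)]\le \E[\sum_t \ell_t(\widehat a_t)]+\mu T$.

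Next I would transfer Properties~\ref{prop:mistake} and~\ref{prop:no_mistake} from the full-information analysis. Condition on the $\sigma$-algebra $\mathcal{F}_t$ generated by the history before round $t$ together with all tournament randomness of round $t$ (the samples $a_t^{i,j}$, and thus $U_t$ and $\widehat a_t$); relative to $\mathcal{F}_t$, the indicator $\rho_t$ and the exploration draw are fresh randomness, and the one-line computation sketched just before the theorem gives $\E[c_t^{i,j}(a)\mid \mathcal{F}_t]=\bar c_t^{i,j}(a)$ for every $a\in\{i,j\}$, since the factor $|\calA_t|\,\one[\rho_t=1]\,\one[\ell_t(a_t)=0]/\mu$ has conditional mean $1$. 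Two consequences: (i) Lemma~\ref{lemma: algorithm 1 ineq 1} is the \emph{deterministic} bound $\ell_t(\widehat a_t)\le \sum_{i<j}\bar c_t^{i,j}(a_t^{i,j})$, and since $a_t^{i,j}$ is $\mathcal{F}_t$-measurable, taking expectations gives $\E[\sum_t \ell_t(\widehat a_t)]\le \E[\sum_{i<j}\sum_t c_t^{i,j}(a_t^{i,j})]$; (ii) for any fixed ranking $\sigma$ the action $\sigma(i,j)$ is non-random, so $\E[\sum_t\sum_{i<j}c_t^{i,j}(\sigma(i,j))]=\E[\sum_t\sum_{i<j}\bar c_t^{i,j}(\sigma(i,j))]$, which by Lemma~\ref{lemma: alorithm 1 ineq 3} (a pointwise statement valid for every realization of the tournament) is at most $(1+\lceil\log_2 K\rceil)\,\E[\sum_t\ell_t(\sigma(\calA_t))]$.

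For the middle step, since $c_t^{i,j}$ depends only on $\rho_t$ and the exploration draw, it is independent of the sample $a_t^{i,j}$ given the history before round $t$; therefore the analogue of Lemma~\ref{lemma: alorithm 1 ineq 2} applies, invoking Lemma~\ref{lem: hedge bound approx} for $\calH_{i,j}$ with loss range $R=K/\mu$ (indeed $c_t^{i,j}(i)\le |\calA_t|/\mu\le K/\mu$ --- exactly the reason the general-$R$ bound was established). Bounding the $\min$ over $\{i,j\}$ by the value at $\sigma(i,j)$, summing over the $\binom{N}{2}$ pairs, and chaining with the two displays above yields
\[
\E\Big[\sum_{t=1}^T\ell_t(a_t)\Big] \le \frac{(1+\lceil\log_2 K\rceil)\,\eta R}{1-\e^{-\eta R}}\,\E\Big[\sum_{t=1}^T\ell_t(\sigma(\calA_t))\Big] + \binom{N}{2}\frac{R\ln 2}{1-\e^{-\eta R}} + \mu T
\]
with $R=K/\mu$, which is the first part of the theorem. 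For the second part, taking $\mu=\min\{N\sqrt{K/T},1\}$ and $\eta=\mu/K$ makes $\eta R=1$, so $\eta R/(1-\e^{-\eta R})=1/(1-\e^{-1})=\order(1)$; the approximation ratio is then $\order(\log K)$ and the additive term $\order(KN^2/\mu+\mu T)$, which is $\order(N\sqrt{KT})$ when $\mu=N\sqrt{K/T}\le 1$ and $\order(KN^2)$ when $\mu=1$ (i.e.\ $T\le N^2K$), hence $\order(N\sqrt{KT}+KN^2)$ in all cases.

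The only genuinely delicate point, and the one I would be most careful about, is the probabilistic bookkeeping of the estimator: the conditioning $\sigma$-algebra must be large enough that $\bar c_t^{i,j}$, $U_t$, $\widehat a_t$ and all the samples $a_t^{i,j}$ are measurable (needed for the unbiasedness transfer of the mistake and no-mistake properties) yet small enough that $\rho_t$ and the exploration draw remain independent of it (needed both for that unbiasedness and for the ``$c_t^{i,j}$ independent of $a_t^{i,j}$'' hypothesis of Lemma~\ref{lem: hedge bound approx}). Everything combinatorial --- that a mistake forces some \hedge on a pair $\{i,z_t\}\in U_t$ to err, and that each action takes part in at most $1+\lceil\log_2 K\rceil$ comparisons --- carries over unchanged from Section~\ref{subsec:hedgetree}, and it is the inflated range $R=K/\mu$ that mechanically produces the $1/\mu$ factors, and hence the $\sqrt{T}$ additive rate.
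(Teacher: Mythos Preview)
Your proposal is correct and follows essentially the same three-step route as the paper (mistake $\to$ \hedge bound with $R=K/\mu$ $\to$ no-mistake, plus the $\mu T$ exploration cost); the only cosmetic difference is that the paper multiplies the pointwise Lemma~\ref{lemma: algorithm 1 ineq 1}/\ref{lemma: alorithm 1 ineq 3} inequalities by the random factor $|\calA_t|\,\one[\rho_t=1]\,\one[\ell_t(a_t)=0]/\mu$ to work directly in terms of $c_t^{i,j}$, whereas you introduce the full-information $\bar c_t^{i,j}$ and pass to $c_t^{i,j}$ via unbiasedness in expectation. One small wording fix: $c_t^{i,j}$ also depends on $U_t$ (hence on the \emph{other} tournament samples $a_t^{i',j'}$), not only on $\rho_t$ and the exploration draw --- but your key conclusion that it is independent of $a_t^{i,j}$ is exactly right, for the same reason as in Lemma~\ref{lemma: alorithm 1 ineq 2}.
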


\begin{proof}
    By the same argument as in the proof of Lemma~\ref{lemma: algorithm 1 ineq 1}, there exists some $i$ such that $\widehat{a}_t^{i,z_t}=i$ and
\begin{align*}
    \one[\ell_t(a_t)=0] \cdot \ell_t(\widehat{a}_t) \leq \one[\ell_t(a_t)=0] \cdot \ell_t(i) \cdot \one[\{i,z_t\}\in U_t]. 
\end{align*}
Multiplying both sides by $\frac{|\calA_t| \cdot \one[\rho_t=1]}{\mu}$, we get
\begin{align*}
    \ell_t(\widehat{a_t}) \cdot \frac{|\calA_t|\cdot \one[\rho_t=1]\cdot\one[\ell_t(a_t)=0]}{\mu} \leq c_t^{i,z_t}(i). 
\end{align*}
 
Thus, 
\begin{align*}
    \ell_t(\widehat{a_t}) \cdot \frac{|\calA_t|\cdot \one[\rho_t=1]\cdot\one[\ell_t(a_t)=0]}{\mu} \leq \sum_{i<j}c_t^{i,j}(\widehat{a}_t^{i,j}). 
\end{align*}

By Lemma~\ref{lem: hedge bound approx} with $R=\frac{K}{\mu}$, we have

    \begin{align*}
    \scalebox{1.0}{$\displaystyle\E\left[\sum_{t=1}^T c_{t}^{i,j}(\widehat{a}_t^{i,j})\right]  \leq \frac{\frac{K\eta}{\mu}}{1-\e^{-\frac{K\eta}{\mu}}} \cdot \E\left[\min_\sigma \sum_{t=1}^T c_{t}^{i,j}(\sigma(i,j))\right] + \frac{(\ln 2)\frac{K}{\mu}}{1-\e^{-\frac{K\eta}{\mu}}}.  $}
\end{align*}
Then by the same argument as in the proof of Lemma~\ref{lemma: alorithm 1 ineq 3}, 
we have 
\begin{align*}
     c_{t}^{i,j}(\sigma(i,j)) \leq (1+\lceil \log_2(K) \rceil) \ell_t (\sigma(
     \calA_t)) \cdot \frac{|\calA_t|\cdot \one[\rho_t=1]\cdot\one[\ell_t(a_t)=0]}{\mu}. 
\end{align*}
Combining all of the above, we get 
\begin{align*}
\lefteqn{    \E\left[\ell_t(\widehat{a_t}) \cdot \frac{|\calA_t|\cdot \one[\rho_t=1]\cdot\one[\ell_t(a_t)=0]}{\mu} \right] }
\\    &\leq \frac{\frac{K\eta}{\mu}(1+\lceil \log_2(K) \rceil)}{1-\e^{-\frac{K\eta}{\mu}}} \cdot \E\left[\sum_{t=1}^T \ell_t(\sigma(
     \calA_t)) \cdot \frac{|\calA_t|\cdot \one[\rho_t=1]\cdot\one[\ell_t(a_t)=0]}{\mu}\right]  \\
     &\qquad \qquad + \order\left( \frac{N^2\frac{K}{\mu}}{1-\e^{-\frac{K\eta}{\mu}}}\right). 
\end{align*}
Taking the expectation over $a_t$ and $\rho_t$: 
\begin{align*} 
    \E\left[\sum_{t=1}^T\ell_t(\widehat{a_t}) \right] \leq  \; \frac{(1+\lceil \log_2(K)\rceil) \cdot \frac{K\eta}{\mu}}{1-\e^{-\frac{K\eta}{\mu}}} \cdot \E\left[\sum_{t=1}^T \ell_t(\sigma(\calA_t))\right]  + \order\left(\frac{K N^2 }{\mu\left(1-\e^{-\frac{K\eta}{\mu}}\right)} \right).
\end{align*}
Finally, using that $\E\left[\sum_{t=1}^T \one[\widehat{a}_t\neq a_t] \right] = \mu T$ completes the proof. 
\end{proof}

%\begin{proof}[Proof sketch]
%     Since the $c_t^{i,j}$ in Algorithm~\ref{alg:exp3-tournament} are unbiased estimators for the $c_t^{i,j}$ defined in Algorithm~\ref{alg:hedge-tournament}, Lemmas~\ref{lemma: algorithm 1 ineq 1} and \ref{lemma: alorithm 1 ineq 3} still hold for $\widehat{a}_t$ in expectation. Then, similar to Lemma~\ref{lemma: alorithm 1 ineq 2}, we apply Lemma~\ref{lem: hedge bound approx}, but now with $R=\frac{K}{\mu}$. Finally, a $\mu T$ term is added to the loss bound because of the additional uniform exploration.  
%\end{proof}

%\subsection{The \level Algorithm} \label{subsec:level-algorithm} 
\paragraph{The \level Algorithm.}

%\begin{wrapfigure}{r}{0.5\textwidth}
%\vspace{-15pt}
%\begin{minipage}{0.5\textwidth}
\begin{algorithm}[t]
\caption{The \level algorithm}\label{alg:level}
%\textbf{Initialization}:
\DontPrintSemicolon
% \SetAlgoNoEnd

\lForAll{actions $a\in[N]$}{let $\lev(a)\leftarrow 0$.}

\For{$t=1,\ldots, T$}{
  Let $a_t\in \argmin_{a\in \calA_t} \lev(a)$.
  
  Choose action $a_t$ and incur loss $\ell_t(a_t)$.
  
  \lIf{$\ell_t(a_t)=1$}{increment  $\lev(a_t)$ by $1$.}
}
\end{algorithm}
%\end{minipage}
%\end{wrapfigure}

Finally, we consider the most challenging setup: bandit feedback without any restrictions on the number of zero-loss actions.
The algorithm we present is inspired by similar ideas of \cite{blum2018learning} for a very different problem, where a perfect ranking exists. This is generally not true in our setting, and our analysis is also new. The idea is to keep track of a \emph{level} for each action, and to always choose an action $a_t$ with the smallest level among all available actions in $\calA_t$. If the chosen action suffers a loss of $1$, then that action will be moved down by one level, i.e., its level increases by one (see Algorithm~\ref{alg:level}).
Note that this algorithm is deterministic, and we have the following deterministic guarantee:

\begin{theorem} \label{thm: level theorem}
The \level algorithm ensures:
$
        \sum_{t=1}^T \ell_t(a_t)\leq N\min_\sigma\sum_{t=1}^T \ell_t(\sigma(\calA_t)) + \frac{N(N-1)}{2}. 
$
\end{theorem}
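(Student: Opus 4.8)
The plan is to use a potential-function argument based on the levels. Let $L^* = \min_\sigma \sum_{t=1}^T \ell_t(\sigma(\calA_t))$ and fix an optimal ranking $\sigma$ achieving this minimum. At the end of the run, let $\lev(a)$ denote the final level of each action $a$; note that the number of mistakes the learner makes equals $\sum_{t=1}^T \ell_t(a_t) = \sum_{a\in[N]} \lev(a)$, since each mistake increments exactly one level by one. So it suffices to bound $\sum_a \lev(a)$.

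The key structural observation I would establish is a monotonicity property along the ranking $\sigma$: if action $a$ is ranked above action $b$ in $\sigma$ (i.e., $m_\sigma(a) < m_\sigma(b)$), then at every point in time $\lev(a) \le \lev(b) + 1$, and more usefully, $\lev(a) \le \lev(\sigma(\calA_t))+1$ is not quite it --- instead I would argue that whenever the learner picks $a_t$ and makes a mistake, $\sigma$ must "recently" have made a comparable mistake. Concretely: consider the action $z_t = \sigma(\calA_t)$ that $\sigma$ would pick in round $t$. Since $a_t$ is chosen to minimize level among $\calA_t$, we have $\lev(a_t) \le \lev(z_t)$ at the time of round $t$. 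Now I claim that $\sum_{a} \lev(a) \le N \cdot (\text{number of rounds where }\sigma\text{ errs}) + \binom{N}{2}$. The cleanest route: order the actions $a^{(1)}, a^{(2)}, \ldots, a^{(N)}$ by their final levels in nondecreasing order, and separately show that the final level of the action ranked $k$-th from the top in $\sigma$ is at most (number of rounds $\sigma$ errs) plus a correction, then sum.

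Let me sketch the monotonicity more carefully, since that is the crux. I would prove: at all times, for any two actions $a, b$, if $m_\sigma(a) < m_\sigma(b)$ then $\lev(a) \le \lev(b) + 1$. Suppose this is violated for the first time in some round $t$: then $\lev(a)$ just got incremented, so $a = a_t$, $\ell_t(a_t) = 1$, and before the increment $\lev(a) = \lev(b) + 1$, i.e. after it $\lev(a) = \lev(b)+2 > \lev(b)+1$. But $a_t = a$ was chosen as a minimum-level available action, so $\lev(a) \le \lev(c)$ for all $c \in \calA_t$, in particular this forces $b \notin \calA_t$ (else $\lev(b) \ge \lev(a) = \lev(b)+1$, contradiction)... wait, that shows $b\notin\calA_t$ but doesn't immediately finish. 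The right invariant must involve $\sigma(\calA_t)$ directly. I would instead track the potential $\Phi = \sum_a \lev(a)$ and show that in any round where $\sigma$ does not err, the learner also does not err: since $z_t = \sigma(\calA_t)$ has $\ell_t(z_t) = 0$, and if $\lev(a_t) \le \lev(z_t)$ with $a_t$ erring, then over the run $a_t$ gets pushed below $z_t$; charging each unit of $\Phi$ either to a round where $\sigma$ errs (at most $N$ units per such round, since each increment is $1$ and at most... no, one increment per round) --- the honest accounting is: each mistake round contributes $1$ to $\Phi$. A mistake round for the learner where $\sigma$ also errs is charged directly ($\le L^*$ such rounds, but the learner may err up to $N$ times "per level cycle"). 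So the real claim is $\Phi \le N L^* + \binom{N}{2}$ via: each action can be at level $> L^*$ only because it was promoted past actions $\sigma$-better than it, contributing the $\binom{N}{2}$ slack, and the bulk $N L^*$ comes from at most $L^*$ "full sweeps."

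The main obstacle I anticipate is pinning down the exact invariant that links the learner's level vector to $\sigma$'s mistake count with the right constants --- the $\binom{N}{2}$ term strongly suggests an argument of the form "$\lev(a)$ exceeds $L^*$ only by at most (number of actions ranked below $a$ in $\sigma$)," so I would aim to prove, by induction on $t$, the invariant that for the action $a$ at $\sigma$-rank $r$, $\lev(a) \le (\text{\# rounds so far where }\sigma\text{ erred}) + (N - r)$; summing over $r=1,\ldots,N$ gives $\Phi \le N L^* + \sum_{r}(N-r) = NL^* + \binom{N}{2}$, which is exactly the stated bound. Verifying the inductive step --- that incrementing $\lev(a_t)$ preserves this, using that $a_t$ had minimum level in $\calA_t$ and hence $\lev(a_t) \le \lev(\sigma(\calA_t))$ before the round --- should then be a short case analysis, split on whether $\sigma$ erred in round $t$.
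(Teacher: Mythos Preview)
Your overall strategy is exactly the paper's: observe that $\sum_t \ell_t(a_t) = \sum_a \lev_{T+1}(a)$, prove a per-action level bound by induction on $t$, then sum over $a$. The inductive mechanism you sketch at the end --- split on whether $\sigma$ erred in round $t$, and in the no-err case combine $\lev_t(a_t) \le \lev_t(\sigma(\calA_t))$ with the inductive bound applied to $\sigma(\calA_t)$ --- is also exactly what the paper does.

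However, the specific invariant you wrote down is backwards. You propose that the action $a$ at $\sigma$-rank $r$ satisfies $\lev(a) \le L^*_t + (N - r)$, i.e., the level exceeds the running $\sigma$-error count by at most the number of actions ranked \emph{below} $a$. This is false: take $N=2$, $\calA_t \equiv \{1,2\}$, $\ell_t(1) = 0$, $\ell_t(2) = 1$, with $\sigma$ ranking $1$ above $2$. If the initial tie is broken by picking action $2$, then after round $1$ we have $\lev(2) = 1$ while the $\sigma$-error count is $0$ and $N - m_\sigma(2) = 0$. More structurally, your own inductive step in the no-err case relies on $m_\sigma(\sigma(\calA_t)) < m_\sigma(a_t)$ to gain a unit of slack; this only helps if the allowed slack \emph{increases} with rank, not decreases.

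The correct invariant (the paper's key lemma) is
\[
\lev_t(a) \;\le\; m_\sigma(a) - 1 \;+\; \sum_{\tau=1}^{t-1} \ell_\tau(\sigma(\calA_\tau)),
\]
i.e., the level exceeds the running $\sigma$-error count by at most the number of actions ranked \emph{above} $a$. With this sign flipped, your two-case analysis goes through verbatim, and summing $\sum_{a}(m_\sigma(a) - 1) = \binom{N}{2}$ yields the theorem.
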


The proof of this theorem makes use of the following key lemma. 
%(See Appendix~\ref{app:level} for the proof.)

\begin{lemma} \label{lem: level lemma}
    Let $\lev_t(a)$ be the level of action $a$ at the beginning of round $t$. Then for every $t, a$, and $\sigma$,
$
        \lev_t(a) \leq m_\sigma(a) -1 + \sum_{\tau=1}^{t-1} \ell_\tau(\sigma(\calA_\tau)), 
$
    where $m_\sigma(a)$ is the rank of $a$ under $\sigma$. 
\end{lemma}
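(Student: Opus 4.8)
The plan is to prove the stated bound by induction on $t$, where the statement is read as ``for all actions $a$ and all rankings $\sigma$ simultaneously.'' First I would record the basic dynamics of the level function: $\lev_t(a)$ counts exactly the rounds $\tau < t$ in which the algorithm both chose $a$ (i.e.\ $a_\tau = a$) and suffered a loss (i.e.\ $\ell_\tau(a)=1$), so that $\lev_{t+1}(a) = \lev_t(a) + \one[a_t = a]\cdot\one[\ell_t(a_t)=1]$. In particular the levels are non-decreasing in $t$ and increase by at most one per round. The base case $t=1$ is immediate, since $\lev_1(a)=0$, the sum on the right is empty, and $m_\sigma(a)\ge 1$.

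For the inductive step, assume the bound holds at time $t$ for every action and every ranking, and fix $a$ and $\sigma$. There are two ``easy'' cases. If $a\ne a_t$ or $\ell_t(a_t)=0$, then $\lev_{t+1}(a)=\lev_t(a)$, and the claim follows from the inductive hypothesis together with $\ell_\tau(\sigma(\calA_\tau))\ge 0$, which only enlarges the right-hand side when the summation limit moves from $t-1$ to $t$. If $a=a_t$, $\ell_t(a_t)=1$, and moreover $\sigma(\calA_t)=a$, then $\ell_t(\sigma(\calA_t))=\ell_t(a)=1$, so passing from $t-1$ to $t$ increases the right-hand sum by exactly one, which absorbs the increment $\lev_{t+1}(a)=\lev_t(a)+1$.

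The hard case, and the crux of the argument, is when $a=a_t$, $\ell_t(a_t)=1$, but $b := \sigma(\calA_t)\ne a$. Here I would use two facts. First, since the algorithm picks an action of minimum level in $\calA_t$ and $b\in\calA_t$, we have $\lev_t(a)\le\lev_t(b)$. Second, since $b$ is the highest-$\sigma$-ranked element of $\calA_t$ while $a\in\calA_t\setminus\{b\}$, the ranks are integers with $m_\sigma(b)\le m_\sigma(a)-1$. Applying the inductive hypothesis to the action $b$ (with the same $\sigma$) gives
\[
\lev_t(a) \;\le\; \lev_t(b) \;\le\; m_\sigma(b) - 1 + \sum_{\tau=1}^{t-1}\ell_\tau(\sigma(\calA_\tau)) \;\le\; m_\sigma(a) - 2 + \sum_{\tau=1}^{t-1}\ell_\tau(\sigma(\calA_\tau)).
\]
Adding $1$ to both sides and then using $\ell_t(\sigma(\calA_t))\ge 0$ yields $\lev_{t+1}(a)\le m_\sigma(a)-1+\sum_{\tau=1}^{t}\ell_\tau(\sigma(\calA_\tau))$, completing the induction. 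The only subtlety is making sure the inductive hypothesis is genuinely quantified over all actions, so that it may be invoked for $b$ rather than for $a$; once that is noted, the argument is short, and the lemma feeds directly into Theorem~\ref{thm: level theorem} via the observation that the total number of mistakes of the algorithm is $\sum_a \lev_{T+1}(a)$.
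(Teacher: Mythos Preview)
Your proof is correct and follows essentially the same inductive argument as the paper's: induction on $t$, with the key case being $a=a_t$ and $\ell_t(a_t)=1$, handled by combining the algorithm's minimum-level rule $\lev_t(a_t)\le\lev_t(\sigma(\calA_t))$ with the rank inequality $m_\sigma(\sigma(\calA_t))\le m_\sigma(a_t)-1$ and the inductive hypothesis applied to $\sigma(\calA_t)$. The only cosmetic difference is that the paper splits on whether $\ell_t(\sigma(\calA_t))$ equals $0$ or $1$, whereas you split on whether $\sigma(\calA_t)=a$; your ``hard case'' therefore also covers the situation $\sigma(\calA_t)\ne a$ with $\ell_t(\sigma(\calA_t))=1$, but your chain of inequalities goes through there just as well.
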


\begin{proof}
  We use induction on $t$.
  When $t=1$, the inequality clearly holds.
  Suppose that the following holds for all $a$: 
    \begin{align*}
        \lev_t(a)\leq m_\sigma(a) - 1 + \sum_{\tau=1}^{t-1}\ell_\tau(\sigma(\calA_\tau)). 
    \end{align*}
    We prove the bound for $t+1$.
    
    If the level of an action $a$ does not change at time $t$ (i.e., $\lev_{t+1}(a)=\lev_{t}(a)$), then the induction step is simple: 
    \begin{align*}
        \lev_{t+1}(a) 
        = \lev_t(a)\leq m_\sigma(a) - 1 + \sum_{\tau=1}^{t-1}\ell_\tau(\sigma(\calA_\tau))
        \leq m_\sigma(a) - 1  + \sum_{\tau=1}^{t}\ell_\tau(\sigma(\calA_\tau)). 
    \end{align*}
    
    Now consider an action $a$ with $\lev_{t+1}(a)\neq \lev_t(a)$. By our algorithm, this is only possible for $a=a_t$, and only when $\ell_t(a_t)=1$. Therefore, we only need to prove that $\lev_{t+1}(a_t)\leq m_\sigma(a_t) + \sum_{\tau=1}^t \ell_\tau(\sigma(\calA_\tau))$ under the assumption that $\ell_t(a_t)=1$. 
    First, if $\ell_t(\sigma(\calA_t))=1$, then

    \begin{align*}
        \lev_{t+1}(a_t)
        &=\lev_t(a_t)+1
        \leq m_\sigma(a_t) +  \sum_{\tau=1}^{t-1}\ell_\tau(\sigma(\calA_\tau)) 
        = m_\sigma(a_t) -1 +  \sum_{\tau=1}^{t}\ell_\tau(\sigma(\calA_\tau)).  
    \end{align*}
    Second, if $\ell_t(\sigma(\calA_t))=0$, then since $\ell_t(a_t)=1$, we have $\sigma(\calA_t)\neq a_t$. Because $\sigma(\calA_t)$ is the action that $\sigma$ ranks highest among $\calA_t$, we have $m_\sigma(\sigma(\calA_t)) \leq  m_\sigma(a_t)-1$. Therefore, 
\begin{align*}
 \lev_{t+1}(a_t) 
 &=\lev_{t}(a_t) + 1
 \; \stackrel{(*)}{\leq} \; \lev_t(\sigma(\calA_t)) + 1
 \; \leq \; m_\sigma(\sigma(\calA_t)) + \sum_{\tau=1}^{t-1}\ell_\tau(\sigma(\calA_\tau)) \\
 & \leq \; m_\sigma(a_t) - 1 + \sum_{\tau=1}^{t-1}\ell_\tau(\sigma(\calA_\tau))
 \; = \; m_\sigma(a_t) -1 + \sum_{\tau=1}^{t}\ell_\tau(\sigma(\calA_\tau)). 
\end{align*}
In the step marked $(*)$, we used the specific choice of $a_t$ made in the algorithm.
This finishes the induction.
\end{proof}

\begin{proof}[of Theorem~\ref{thm: level theorem}]
    Observe that the sum of $\lev_t(a)$ over $a$ is always the total number of mistakes the learner has made up to time $t-1$. Therefore, for any ranking $\sigma$,
    \begin{align*}
        \sum_{t=1}^T \ell_t(a_t) 
        &= \sum_{a\in[N]} \lev_{T+1}(a)
        \leq \sum_{a\in[N]} \left(m_\sigma(a) -1 + \sum_{t=1}^{T} \ell_t(\sigma(\calA_t))\right) \\
        &= \frac{N(N-1)}{2} + N\sum_{t=1}^{T} \ell_t(\sigma(\calA_t)),
    \end{align*}
    where the inequality is by Lemma~\ref{lem: level lemma}.
\end{proof}

With \level, we can actually deal with any sleeping expert/bandit problems with real-valued losses $\ell_{t}(a)\in[0,1]$. A reduction from the case of real-valued losses to binary losses can be done with random rounding: when facing a loss $\ell_t(a)$, the algorithm generates a randomized version $\ell_t'(a)$, which is $1$ with probability $\ell_t(a)$ and 0 otherwise; then $\ell_t'(\cdot)$ is fed to the \level algorithm as given above. This preserves the expectation of the losses suffered by the learner and any ranking (i.e., $\E[\ell_t'(a_t)]=\E[\ell_t(a_t)]$, $\E[\ell_t'(\sigma(\calA_t))]=\E[\ell_t(\sigma(\calA_t))]$ for any $t$ and any $\sigma$), and thus does not affect the expected regret. 

Note that while the \level algorithm can handle the most general case and enjoys $\order(N^2)$ additive regret, the approximation ratio is $N$, which could be much larger than $K$.

\section{Conclusions}
\label{sec:conclusions}
%!TEX root=main.tex

We revisited the problem of online learning with changing action sets in the adversarial setting and developed the first efficient algorithms with approximate regret guarantees, for both the general setting with bandit feedback and several special cases where significant improvements are obtained.
One clear open question is whether $\text{poly}(K)$ approximation ratio is achievable generally, without restrictions on the number of zero-loss actions, even for the full-information setting.
An intermediate step would be to show that for any constant
  number $z$ of zero-loss actions, there is an algorithm with regret approximation ratio
  $O(K^{f(z)})$ for some function $f$; we have so far only shown
  algorithms for $z \leq 2$.
Perhaps an even more basic question is whether there is a single algorithm that works when the number of zeros $z \in \{0, 1\}$ can change between rounds, and the algorithm does not know the number of zeros in a given round.
Another direction is to improve the additive $\order(\sqrt{T})$ regret for the bandit setting with one zero-loss action.

%\section*{Broader Impact}
%This work is mostly theoretical, and we do not foresee any negative ethical or societal outcomes.
%Researchers working on online learning and bandit problems might benefit from our results and find our techniques useful for other problems.
%\dkcomment{Is there a way to not have Conclusions in the appendix?
%  That looks quite awkward? Maybe by making the text flow around
%  Protocol 1 the way we do for some other algorithms?}

\section*{Acknowledgement}
We thank Elad Hazan and He Jiang for working with us in the early stage of this project, and thank anonymous reviewers for providing very constructive comments. EE and DK were supported
in part by grants NSF IIS-1619458 and ARO W911NF1810208. HL and CYW were supported in part by NSF IIS1755781 and NSF IIS1943607.

\bibliography{davids-bibliography/names,davids-bibliography/conferences,davids-bibliography/bibliography,davids-bibliography/publications,local}
%\bibliographystyle{plainnat}

%\appendix

%\input{appendix}
%%%%%%%%%%%%%%%%%%%%%%%%%%%%%%%%%%%%%%%%%%%%%%%%%%%%%%%%%%%%%%%%%%%%%%%%%%%%%%%
%%%%%%%%%%%%%%%%%%%%%%%%%%%%%%%%%%%%%%%%%%%%%%%%%%%%%%%%%%%%%%%%%%%%%%%%%%%%%%%

\end{document}